\newtheorem{theorem}{Theorem}
\newtheorem{lemma}{Lemma}
\newtheorem{remark}{Remark}
\begin{document}

\title{On Suppressing Range of Adaptive Stepsizes of Adam to Improve Generalisation Performance}

\author{Guoqiang~Zhang
\thanks{\IEEEcompsocthanksitem G.~Zhang is with the School of Electrical and Data Engineering, University of Technology, Sydney, Australia. Email:{guoqiang.zhang@uts.edu.au}}



}



\maketitle

\begin{abstract}
A number of recent adaptive optimizers improve the generalisation performance of Adam by essentially reducing the variance of adaptive stepsizes to get closer to SGD with momentum. Following the above motivation, we  suppress the range of the adaptive stepsizes of Adam by exploiting the layerwise gradient statistics.
In particular, at each iteration, we propose to perform three consecutive operations on the second momentum $\boldsymbol{v}_t$ before using it to update a DNN model: (1): down-scaling, (2): $\epsilon$-embedding, and (3): down-translating. The resulting algorithm is referred to as SET-Adam, where SET is a brief notation of the three operations. The down-scaling operation on $\boldsymbol{v}_t$ is performed layerwise by making use of the angles between the layerwise subvectors of $\boldsymbol{v}_t$ and the corresponding all-one subvectors. Extensive experimental results show that SET-Adam outperforms eight adaptive optimizers when training transformers and LSTMs for NLP, and VGG and ResNet for image classification over CIAF10 and CIFAR100 while matching the best performance of the eight adaptive methods when training WGAN-GP models for image generation tasks. Furthermore, SET-Adam produces higher validation accuracies than Adam and AdaBelief for training ResNet18 over ImageNet.  
\end{abstract}

\begin{IEEEkeywords}
Adam, adaptive optimization, DNN, transformer.
\end{IEEEkeywords}

\vspace{-2mm}
\section{Introduction}
\label{sec:intro}
\vspace{-1mm}

In the last decade, stochastic gradient descent (SGD) and its variants have been widely applied in deep learning \cite{Lecun15nature, Transformer17, Silver16GoGame, Chen20WaveGrad} due to their simplicity and effectiveness.  In the literature, SGD with momentum \cite{Sutskever13NAG, Polyak64CM}) dominates over other optimizers for image classification tasks \cite{He15ResNet, Wilson17AdamNegative}. Suppose the objective function $f({\boldsymbol{\theta}}):\boldsymbol{\theta}\in \mathbb{R}^d$ of a DNN model is differentiable. Its update expression for minimising $f(\boldsymbol{\theta})$ can be represented as
\begin{align}
\boldsymbol{m}_{t} & = \beta_t \boldsymbol{m}_{t-1} + \boldsymbol{g}_t  \label{equ:SGDM1}\\
\boldsymbol{\theta}_{t} &= \boldsymbol{\theta}_{t-1} - \eta_t \boldsymbol{m}_{t},  \label{equ:SGDM2}
\end{align}
where $\boldsymbol{g}_t = \nabla f(\boldsymbol{\theta}_{t-1})$ is the gradient at ${\boldsymbol{\theta}}_{t}$,  and $\eta_t$ is the common stepsize for all the coordinates of ${\boldsymbol{\theta}}$. 

\begin{algorithm}[t!]
   \caption{\small SET-Adam: suppressing  range of adaptive stepsizes of Adam by performing three consecutive operations on $\{\boldsymbol{v}_t\}$}
   \label{alg:Adam}
\begin{algorithmic}[1]
   \STATE {\small {\bfseries Input:} $\beta_1$, $\beta_2$,  $\eta_t$, $\epsilon > 0$, $\tau=0.5$, $\boldsymbol{1}_l=[1,1,\ldots, 1]^T$  }
   \STATE {\small {\bfseries  Init.:} $\boldsymbol{\theta}_0\hspace{-0.5mm}\in\hspace{-0.5mm} \mathbb{R}^d$,  $\boldsymbol{m}_0 \hspace{-0.5mm}=\hspace{-0.5mm} 0$, $\boldsymbol{v}_{0}=0  \in \mathbb{R}^d$ }
   \FOR{\small $t=1, 2, \ldots, T$}
   \STATE \hspace{-0mm}{\small  $\boldsymbol{g}_t \leftarrow \nabla f({\boldsymbol{\theta}}_{t-1}) $  }
   \STATE \hspace{-0mm}{\small $\boldsymbol{m}_{t} \leftarrow \beta_1 \boldsymbol{m}_{t-1}  + (1-\beta_1) \boldsymbol{g}_t$ }
   \STATE \hspace{-0mm}{\small $\boldsymbol{v}_{t} \leftarrow \beta_2 \boldsymbol{v}_{t-1}  + (1-\beta_2) \boldsymbol{g}_t^2$ }
   \FOR{\small $l=1,\ldots, L$}
   \STATE {\small $\tilde{\boldsymbol{v}}_{l,t}=\boldsymbol{v}_{l,t}\cos^2(\angle \boldsymbol{v}_{l,t}\boldsymbol{1}_l )\qquad\quad\quad \textcolor{blue}{\textbf{[down-scaling]}}$}
   \STATE {\small ${\boldsymbol{w}}_{l,t}=\sqrt{\tilde{\boldsymbol{v}}_{l,t}/(1-\beta_2^t)\textcolor{blue}{+\epsilon}}\quad\quad\quad\; \textcolor{blue}{\textbf{[}\epsilon \textbf{-embedding]}}$}   
   \STATE {\small $\tilde{\boldsymbol{w}}_{l,t}={\boldsymbol{w}}_{l,t}-\tau \left(\min_{i=1}^{d_l} {\boldsymbol{w}}_{l,t}[i]\right) \; \textcolor{blue}{[\textbf{down-translating]}}$}      
   \ENDFOR
   \STATE \hspace{-0mm}{\small $  \tilde{\boldsymbol{m}}_{t} \hspace{-0.6mm}\leftarrow \frac{\boldsymbol{m}_{t}}{1-\beta_1^{t}} $ } 
  \STATE {\small  $\boldsymbol{\theta}_{t} \hspace{-0.6mm}\leftarrow \boldsymbol{\theta}_{t-1} -\frac{\eta_t\tilde{\boldsymbol{m}}_{t} }{\tilde{\boldsymbol{w}}_{t} } $}
   \ENDFOR 
   \STATE {\bfseries Output:} {\small $\boldsymbol{\theta}_{T}$  }
\end{algorithmic}
\end{algorithm}

To bring flexibility to SGD with momentum, an active research trend is to introduce elementwise adaptive stepsizes for all the coordinates of $\boldsymbol{m}_{t}$ in (\ref{equ:SGDM2}), referred to as \emph{adaptive optimization} \cite{Duchi11AdaGrad, Tieleman12RMSProp, Kingma17}. 
In the literature, Adam \cite{Kingma17} is probably the most popular adaptive optimization method (e.g., \cite{Transformer17,liu2021Swin,Zhang2019Adam,Devlin18Bert}). Its update expression can be written as 
\vspace{-0mm}
\begin{align}
\hspace{-10mm}[\textbf{Adam}]\hspace{0mm} &\left\{\begin{array}{l}
\hspace{-2mm}\boldsymbol{m}_{t}  =\beta_1\boldsymbol{m}_{t-1} + (1-\beta_1) \boldsymbol{g}_t      \qquad \qquad  \qquad  \quad \\
\hspace{-2mm}\boldsymbol{v}_{t}  = \beta_2 \boldsymbol{v}_{t-1} + (1-\beta_2) \boldsymbol{g}_t^2  \qquad  \qquad \qquad \quad   \\
\hspace{-2mm} \boldsymbol{\theta}_{t}  = \hspace{-0.8mm} \boldsymbol{\theta}_{t-1}   \hspace{-0.8mm} - \hspace{-0.8mm}  \eta_t \frac{1}{1-\beta_1^{t}}  \frac{ \boldsymbol{m}_{t} }{\sqrt{\boldsymbol{v}_{t}/(1-\beta_2^{t})}+\epsilon}
\end{array}\right.\hspace{-20mm},
\label{equ:Adam}
\end{align}
where $\boldsymbol{g}_t=f(\boldsymbol{\theta}_{t-1})$, $0<\beta_1, \beta_2<1$, and $\epsilon>0$. The two vector operations $(\cdot)^2$ and $\cdot/\cdot$ are performed in an elementwise manner. The two exponential moving averages (EMAs) $\boldsymbol{m}_{t}$ and $\boldsymbol{v}_{t}$ are referred to as the first and second momentum. The two quantities $1-\beta_1^{t}$ and ${1-\beta_2^{t}}$ are introduced to compensate for the estimation bias in $\boldsymbol{m}_{t}$ and $\boldsymbol{v}_{t}$, respectively. $\eta_t$ is the common stepsize while 
$\boldsymbol{\alpha}_t=1/\left(\sqrt{\boldsymbol{v}_t/(1-\beta_2^{t})}+\epsilon\right)\in \mathbb{R}^d$
represents the elementwise adaptive stepsizes. 

\footnotetext{Considering Adam at iteration $t$, the layerwise average of adaptive stepsizes for the $l$th layer of VGG11 is computed as $\frac{1}{d_l}\sum_{i=1}^{d_l}\boldsymbol{\alpha}_{l,t}[i]=\frac{1}{d_l}\sum_{i=1}^{d_l}1/(\sqrt{\boldsymbol{v}_{l,t}[i]/(1-\beta_2^t)}+\epsilon)$, where $\boldsymbol{v}_{l,t}\in \mathbb{R}^{d_l}$ is the subvector of $\boldsymbol{v}_{t}\in\mathbb{R}^d$ for the $l$th layer. }

Due to the great success of Adam in training DNNs, various extensions of Adam have been proposed, including AdamW \cite{Loshchilov19AdamW}, NAdam \cite{Dozat16NAdam}, Yogi \cite{Zaheer18Yogi}, MSVAG  \cite{Balles17MSVAG}, Fromage \cite{Bernstein20Fromage}. It is worth noting that the following three algorithms extend Adam by reducing the variance of the adaptive stepsizes in different manner.  In \cite{Liu19RAdam}, the authors suggested multiplying  $\boldsymbol{m}_t$ by a rectified scalar when computing $\boldsymbol{\theta}_t$ in (\ref{equ:Adam}) when the variance is large, which is referred to as RAdam. Empirical evidence shows that RAdam produces better generalization. The AdaBound method of \cite{Luo19AdaBound} is designed to avoid extremely large and small adaptive stepsizes of Adam, which has a similar effect as RAdam. In practice, AdaBound works as an adaptive method at the beginning of the training process and gradually transforms to SGD with momentum, where all the adaptive stepsizes tend to converge to a single value. The AdaBelief method of \cite{Zhuang20Adabelief} extends Adam by tracking the EMA of the squared prediction error $(\boldsymbol{m}_t-\boldsymbol{g}_t)^2$ instead of $\boldsymbol{g}_t^2$ when computing the second momentum $\boldsymbol{v}_t$. In general, $\boldsymbol{m}_t$ can be viewed as a reliable prediction of $\boldsymbol{g}_t$, making the mean and variance of $(\boldsymbol{m}_t-\boldsymbol{g}_t)^2$ across all the coordinates smaller than those of $\boldsymbol{g}_t^2$. As a result, the variance of the adaptive stepsizes in AdaBelief would be smaller than that in Adam. Conceptually speaking, the above three methods aim to reduce the range of the adaptive stepsizes of Adam to mimic the convergence behavior of SGD with momentum to a certain extent. 

Based on our observation that the gradient statistics in Adam are generally heterogeneous across different neural layers  (see Fig.~\ref{fig:SETAdam_mean_compare}-\ref{fig:SETAdam_std_compare}), we consider suppressing the range of adaptive stepsizes of Adam by using layerwise gradient statistics. Suppose at iteration $t$, $\boldsymbol{v}_t$ is the obtained second momentum by following (\ref{equ:Adam}). A sequence of three operations are performed on $\boldsymbol{v}_t$ (see Alg.~1 for details) before using it in the update of the DNN model $\boldsymbol{\theta}$, namely, (1): down-scaling, (2): $\epsilon$-embedding, and (3): down-translating. The resulting algorithm is referred to as SET-Adam, where  ``SET'' is a short notation of the three operations.  Fig.~\ref{fig:SETAdam_mean_compare} and \ref{fig:SETAdam_std_compare} demonstrates that SET-Adam indeed produces a more compact range of adaptive stepsizes than Adam  for training VGG11 over CIFAR10. At the end of the training process, the 11 layerwise average stepsizes in Fig.~\ref{fig:SETAdam_mean_compare}  do not converge to a single value, indicating the adaptability of SET-Adam.

\begin{figure}[t!]
\centering
\includegraphics[width=75mm]{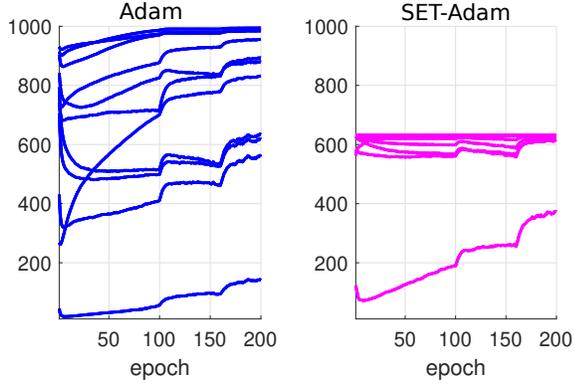}
\vspace*{-0.0cm}
\caption{\small Comparison of layerwise average of adaptive stepsizes for the 11 neural layers of VGG11 by training over CIFAR10 for 200 epochs. See Appendix~\ref{appendix:fig_setup} for the parameter setups of the two methods, where \textcolor{blue}{the optimal parameter $\epsilon$ for Adam was selected from a discrete set to give the best validation performance}. The jumps in the curves at 100 and 160 epochs are due to the change in the common stepsize. SET-Adam has a much more compact range of layerwise average stepsizes than Adam.   }
\label{fig:SETAdam_mean_compare}
\vspace*{-0.0cm}
\end{figure}

To summarize, we make four contributions with SET-Adam. \vspace{-4.5mm}
\begin{enumerate}[label=(\alph*)]
    \item We propose to down-scale the layerwise subvectors $\{\boldsymbol{v}_{l,t}\}_{l=1}^L$ of $\boldsymbol{v}_t$ by exploiting the angles between the subvectors $\{\boldsymbol{v}_{l,t}\}_{l=1}^L$ and the corresponding all-one vectors $\{\boldsymbol{1}_l\}_{l=1}^L$ of the same dimensions. Specifically, the subvectors of  $\boldsymbol{v}_t$ of which the angles are relatively large are down-scaled to a large extent. On the other hand, a small angle indicates that the associated subvector $\boldsymbol{v}_{l,t}$ has roughly the same value across all its coordinates, implying this particular neural layer has great similarity to the effect of SGD with momentum at the current iteration. In this case, it is reasonable to make the down-scaling negligible. It can be shown that the above operation is able to suppress the range of adaptive stepsizes compared to Adam.  \vspace{-0mm}
    \item Differently from (\ref{equ:Adam}) of Adam, the $\epsilon$-embedding operation puts $\epsilon$ inside the sqrt operation (see Alg.~1). We are aware that the AdaBelief optimizer also utilizes the above $\epsilon$-embedding operation but without a proper motivation (see Appendix~\ref{appendix:AdaBelief_embedding} for mathematical verification). We show via a Taylor expansion that putting $\epsilon$ inside the sqrt operation essentially suppresses the range of the adaptive stepsizes.  \vspace{-0mm}
    \item As indicated in Alg.~1, the down-translating operation subtracts a scalar value from $\boldsymbol{w}_{l,t}$ for the $l$th neural layer, where the scalar is computed as a function of $\boldsymbol{w}_{l,t}$.  It is designed to uplift the resulting adaptive stepsizes to avoid extreme small ones, which is inspired by the design of the AdaBound optimizer. It is found that the setup $\tau=0.5$ for controlling the strength of the down-translating works well in practice, and there is no need to tune the parameter.  
    \item Extensive experimental results show that SET-Adam yields considerably better performance than eight adaptive optimizers for training transformer \cite{Transformer17} and LSTM \cite{HochSchm97LSTM} models in natural language processing (NLP) tasks, and VGG11 \cite{Simonyan14VGG} and ResNet34 in image classification tasks over CIFAR10 and CIFAR100. It is also found that SET-Adam  matches the best performance of the eight methods when training WGAN-GP models in image generation tasks. Lastly, SET-Adam outperforms Adam and AdaBelief when training ResNet18 on the large ImageNet dataset. The computational complexity of SET-Adam was evaluated for training VGG11 and ResNet34. The results show that SET-Adam consumes an additional $12\%-20\%$ time per epoch compared to Adam. \vspace{-0mm}
\end{enumerate}

\begin{figure}[t!]
\centering
\includegraphics[width=75mm]{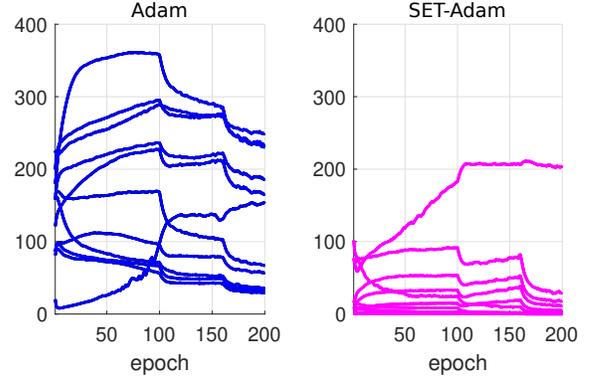}
\vspace*{-0.0cm}
\caption{\small{Comparison of layerwise standard deviations (stds) of adaptive stepsizes for the 11 neural layers by training VGG11 over CIFAR10 for 200 epochs. SET-Adam has much smaller layerwise stds than Adam for ten out of eleven nueral layers.    } }
\label{fig:SETAdam_std_compare}
\vspace*{-0.5cm}
\end{figure}

%


\noindent\textbf{Notations}: We use small bold letters to denote vectors.  The $l_2$ and $l_{\infty}$ norms of a vector $\boldsymbol{y}$ are denoted as $\|\boldsymbol{y}\|_2$ and $\|\boldsymbol{y}\|_{\infty}$, respectively. Given an $L$-layer DNN model $\boldsymbol{\theta}$ of dimension $d$, we use $\boldsymbol{\theta}_l$ of dimension $d_l$ to denote the subvector of $\boldsymbol{\theta}$ for the $l$th layer. Thus, there is $\sum_{l=1}^L d_l=d$. The $i$th element of $\boldsymbol{\theta}_l$ is represented by $\boldsymbol{\theta}_l[i]$. The notation $[L]$ stands for the set $[L] = \{1,\ldots, L\}$. Finally, the angle between two vectors $\boldsymbol{y}$ and $\boldsymbol{x}$ of the same dimension is denoted by $\angle \boldsymbol{x}\boldsymbol{y}$. 

\vspace{-0mm}
\section{Algorithmic Design of SET-Adam }
\label{sec:alg}
\vspace{-0mm}

In this section, we explain in detail the three operations introduced in SET-Adam sequentially for processing the second momentum $\boldsymbol{v}_t$ at iteration $t$ (see Alg.~1). We point out that the first two operations \emph{down-scaling} and \emph{$\epsilon$-embedding} suppress the range of adaptive stepsizes of Adam while the last operation \emph{down-translating} uplifts the resulting adaptive stepsizes to avoid extreme small ones. A convex convergence analysis for SET-Adam is presented at the end of the section.    



\vspace{-1mm}
\subsection{Motivation of layerwise down-scaling operation}\vspace{-0mm}
Motivated by the existing work \cite{Liu19RAdam,Luo19AdaBound,Zhuang20Adabelief}, we would like to reduce the range of the adaptive stepsizes of Adam to make the new optimizer get closer to SGD with momentum. To achieve the above goal, we consider processing the second momentum $\boldsymbol{v}_t$ of (\ref{equ:Adam}) in a layerwise manner before using it to update a DNN model $\boldsymbol{\theta}$. In general, the parameters within the same neural layer are functionally homogeneous when processing data from the layer below. It is likely that the gradients within the same layer follow a single distribution, making it natural to perform layerwise processing. On the other hand, the gradient statistics can be quite different across different neural layers due to progressive nonlinear processing from bottom to top layers. See Fig.~\ref{fig:SETAdam_mean_compare}-\ref{fig:SETAdam_std_compare} for the heterogeneous layerwise curves of Adam. Finally, it is computationally more efficient to perform layerwise processing than operating on the entire vector $\boldsymbol{v}_t$ due to the nature of back-propagation when training the model $\boldsymbol{\theta}$,

We now consider what kind of layerwise processing would be desirable to extend Adam. Firstly, we note that the parameter $\epsilon$ of (\ref{equ:Adam}) in Adam defines an upper bound on the adaptive stepsizes and is independent of neural layer and iteration indices. We use $\boldsymbol{v}_{l,t}$ and $\boldsymbol{\alpha}_{l,t}$ to denote the subvectors of $\boldsymbol{v}_{t}$ and $\boldsymbol{\alpha}_{t}$ for the $l$th neural layer, respectively. It is immediate that the upper bound\footnote{For  the case that $\epsilon$ is inside the sqrt operation, the uppper bound becomes 
$\frac{1}{\sqrt{\epsilon}}\geq \max_{l=1}^L\max_{i=1}^{d_l} (\boldsymbol{\alpha}_{l,t}[i] \hspace{-0.8mm}=\hspace{-0.8mm}  1/\sqrt{\boldsymbol{v}_{l,t}[i]/(1\hspace{-0.8mm}-\hspace{-0.8mm}\beta_2^t)+\hspace{-0.8mm}\epsilon}  )$. In this case, the motivation for down-scaling still holds. } is
\begin{align}
\frac{1}{\epsilon}\geq \max_{l=1}^L\max_{i=1}^{d_l} \Big(\boldsymbol{\alpha}_{l,t}[i] \hspace{-0.8mm}=\hspace{-0.8mm}  1/\Big(\sqrt{\boldsymbol{v}_{l,t}[i]/(1\hspace{-0.8mm}-\hspace{-0.8mm}\beta_2^t)}\hspace{-0.8mm}+\hspace{-0.8mm}\epsilon\Big)  \Big).
\label{equ:up_bound_Adam}
\end{align}
Suppose we replace each $\boldsymbol{v}_{l,t}$ in (\ref{equ:up_bound_Adam}) by $\gamma_{l,t}\boldsymbol{v}_{l,t}$, where $\gamma_{l,t}\in (0,1)$. If the scalars $\{\gamma_{l,t}\}_{l=1}^L$ are sufficiently small in the extreme case, all the adaptive stepsizes of the new method tend to approach the upper bound in (\ref{equ:up_bound_Adam}). As a result, the new method will have a smaller range of adaptive stepsizes than Adam either in a layerwise manner or globally. 

We propose to compute the scalar $\gamma_{l,t}$ for $\boldsymbol{v}_{l,t}$ as a function of the angle $\angle \boldsymbol{v}_{l,t}\boldsymbol{1}_{l}$ between the subvector $\boldsymbol{v}_{l,t}$ and the corresponding all-one vector $\boldsymbol{1}_l=[1,1,\ldots, 1]^T$. 
The angles $\{\angle \boldsymbol{v}_{l,t}\boldsymbol{1}_{l}\}_{l=1}^L$ reflect the variances of $\{\boldsymbol{v}_{l,t}\}_{l=1}^L$ across their respective coordinates to a certain extent, which are different across different layers.  
By doing so, we implictly take into account the heterogeneity of layerwise gradient statistics instead of setting an iteration-dependent common value for $\gamma_{l,t}$ across all neural layers. 

On the other hand, RAdam and AdaBound in the literature do not take into account the heterogeneity of statistics in $\{\boldsymbol{v}_{l,t}\}_{l=1}^L$ accross different layers. This might be the reason that the empirical performance of SET-Adam is better than the above two optimizers as will be demonstrated in Section~\ref{sec:exp}.


\subsection{Design of layerwise down-scaling operation}
Consider the $l$th layer of a DNN model at iteration $t$. We down-scale $\boldsymbol{v}_{l,t}$ by a scalar $ \gamma_{l,t}$, given by\footnote{We have also considered the high-order case $\gamma_{l,t}=\cos^n(\angle \boldsymbol{v}_{l,t}\boldsymbol{1}_l)$, $n=4$. It is empirically found that $n=4$ makes the down-scaling operation too aggressive while at the same time incurring additional computation cost. We therefore ignore the high-order cases in the main paper.  }
\begin{align}
    \tilde{\boldsymbol{v}}_{l,t}=\gamma_{l,t}\boldsymbol{v}_{l,t}=
    \cos^2(\angle\boldsymbol{v}_{l,t}\boldsymbol{1}_{l})\boldsymbol{v}_{l,t} \label{equ:downScaling}
\end{align}
As illustrated in Fig.~\ref{fig:vector_projection}, the quantity $ \tilde{\boldsymbol{v}}_{l,t}$ can be viewed as performing two vector projections between  $\boldsymbol{v}_{l,t}$ and $\boldsymbol{1}_{l}$. That is, we first project $\boldsymbol{v}_{l,t}$ onto the vector direction $\boldsymbol{1}_{l}/\|\boldsymbol{1}_{l}\|$, and then project back the obtained vector onto the vector direction $\boldsymbol{v}_{l,t}/\|\boldsymbol{v}_{l,t}\|$.  

Our motivation for choosing the all-one vector $\boldsymbol{1}_l$ as a reference for the angle computation is that we would like to push the adaptive stepsizes of Adam to be much more compact so that the new optimizer has a similar effect as that of SGD with momentum. With the expression (\ref{equ:downScaling}), it is natural that those layerwise subvectors of $\boldsymbol{v}_t$ with large angles $\{\angle\boldsymbol{v}_{l,t}\boldsymbol{1}_{l}\}$ are down-scaled to a large extent. On the contrary, a small angle implies that all the elements in the associated subvector $\boldsymbol{v}_{l,t}$ are roughly the same. In this case, the scalar $\gamma_{l,t}$ is close to 1, suggesting that the down-scaling operation is negligible as desired.

\begin{figure}[t!]
\centering\includegraphics[width=75mm]{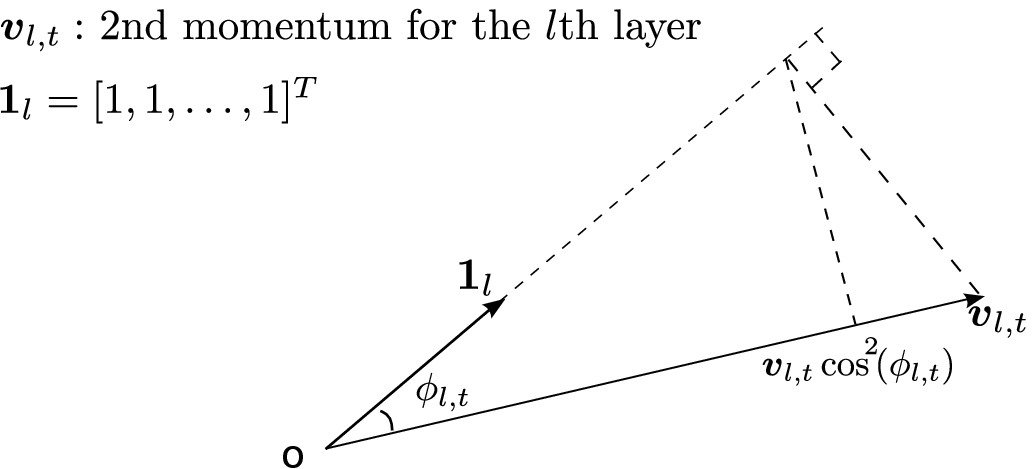}
\vspace*{-0.0cm}
\caption{\small{ Demonstration of the \textcolor{blue}{down-scaling} operation in SET-Adam. The vector $\boldsymbol{1}_l$ is of the same dimension as $\boldsymbol{v}_{l,t}$} }
\label{fig:vector_projection}
\vspace*{-0.45cm}
\end{figure}


For the extreme case that each neural layer has only one parameter (i.e., $\boldsymbol{v}_{l,t}\in \mathbb{R}$, $\forall l\in [L]$), it is immediate that 
$\gamma_{l,t}=\cos^2(\angle\boldsymbol{v}_{l,t}\boldsymbol{1}_{l})=1$. That is, the down-scaling operation has no effect, which is safe from the viewpoint of implementation stability.

\vspace{-2mm}
\subsection{$\epsilon$-embedding for suppressing range of adaptive stepsizes }\vspace{-1mm}

It is known from literature that the  $\epsilon$ parameter is originally introduced in adaptive optimizers to avoid division by zero. In this subsection, we show that the placement of the $\epsilon$ parameter in the update expressions has a significant impact on the adaptive stepsizes. In particular, we will show in the following that by a Taylor approximation that putting the $\epsilon$ parameter inside the sqrt operation of (\ref{equ:Adam}) in Adam helps to suppress the range of adaptive stepsizes. 

Suppose Adam$^\star$ is obtained by putting  $\epsilon$ inside the sqrt operation in (\ref{equ:Adam}) (see Appendix~\ref{appendix:adamplus} for Adam$^\star$). We now study the modified adaptive stepsizes in Adam$^{\star}$: $\tilde{\boldsymbol{\alpha}}_t= 1/\sqrt{\boldsymbol{v}_t/(1-\beta_2^t)+\epsilon}$, which can be approximated to be
\begin{align}
\tilde{\boldsymbol{\alpha}}_t&= 1/\sqrt{{\boldsymbol{v}}_t/(1-\beta_2^t)+\epsilon} \label{equ:epsilon_effect0} \\
&\approx  \frac{1}{\underbrace{\sqrt{{\boldsymbol{v}}_t/(1-\beta_2^t)}}_{\textrm{1st term}}+\underbrace{\frac{1}{2\sqrt{{\boldsymbol{v}}_t/(1-\beta_2^t)}}\epsilon}_{\textrm{2nd term}} },\label{equ:epsilon_effect}
\end{align}
where the Taylor approximation is applied to a function $h(x)=\sqrt{\boldsymbol{a}+x}$ around $x=0$, where $x=\epsilon$ and $\boldsymbol{a}={\boldsymbol{v}}_t/(1-\beta_2^t)$.

\begin{figure}[t!]
\centering
\includegraphics[width=80mm]
{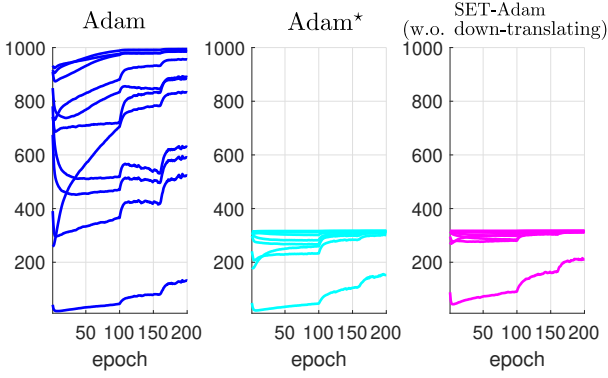}
\vspace*{-0.0cm}
\caption{\small{Comparison of layerwise average of adaptive stepsizes for the 11 neural layers by training VGG11 over CIFAR10 for 200 epochs.  \textcolor{blue}{For the plot of SET-Adam, the down-translating operation is ignored and only the first two operations are included.} See Appendix~\ref{appendix:fig_setup} for the algorithmic parameter setups. } }
\label{fig:AdamPlus}
\vspace*{-0.3cm}
\end{figure}


Next we investigate (\ref{equ:epsilon_effect}). Generally speaking,  small elements of ${\boldsymbol{v}}_t$ lead to large adaptive stepsizes while large elements lead to small adaptive stepsizes due to the inverse operation $1/(\cdot)$. It is clear from (\ref{equ:epsilon_effect}) that for small elements of ${\boldsymbol{v}}_t$, the second term in the denominator is relatively large, implicitly penalizing large stepsizes. Furthermore, (\ref{equ:epsilon_effect0}) indicates that those large stepsizes are upper-bounded by the quantity $1/\sqrt{\epsilon}$.  In contrast, for large elements of ${\boldsymbol{v}}_t$, the second term is relatively small, thus reducing the occurrence of extremely small adaptive stepsizes. In short, moving $\epsilon$ from outside of the sqrt operation to inside helps to suppresses the range of adaptive stepsizes in Adam.

Similarly, employment of the $\epsilon$-embedding in SET-Adam would also help to suppress the range of adaptive stepsizes. Mathematically, applying the $\epsilon$-embedding on $\tilde{\boldsymbol{v}}_{l,t}$ of (\ref{equ:downScaling}) leads to 
\begin{align}
 \boldsymbol{w}_{l,t}&=\sqrt{\tilde{\boldsymbol{v}}_{l,t}/(1-\beta_2^t)+\epsilon} \nonumber\\ 
 &=   \sqrt{{\cos^2(\angle \boldsymbol{v}_{l,t}\boldsymbol{1}_l)\boldsymbol{v}}_{l,t}/(1-\beta_2^t)+\epsilon}. \label{equ:embedding}
\end{align}
 As we mentioned earlier, AdaBelief also utilizes $\epsilon$-embedding in the update expressions (see Appendix~\ref{appendix:AdaBelief_embedding} for verification), which might be another reason AdaBelief outperforms Adam.    

Fig.~\ref{fig:AdamPlus} demonstrates that   Adam$^\star$ indeed has a more compact range of adaptive stepsizes than Adam due to the proper placement of the $\epsilon$ parameter. SET-Adam without the down-translating operation further suppresses the range of adaptive stepsizes of Adam$^\star$ due to the additional down-scaling operation. 
At epoch 200, the eleven layerwise average stepsizes in Adam are distributed in [190,1000] while ten out of eleven layerwise average stepsizes in SET-Adam are close to a single value of 320. 

\vspace{-2mm}
\subsection{Down-translating for avoiding extreme small adaptive stepsizes}
\vspace{-1mm}

We notice that due to the $\epsilon$-embedding operation in (\ref{equ:embedding}), the elements in the subvector $\boldsymbol{w}_{l,t}$ are lower-bounded by the positive scalar $\sqrt{\epsilon}$, i.e., $\boldsymbol{w}_{l,t}[i]\geq \sqrt{\epsilon}$, $\forall i$. We propose to further subtract a scalar from $\boldsymbol{w}_{l,t}$ to avoid extreme small adaptive stepsizes in SET-Adam, which is given by
\begin{align}
 \tilde{\boldsymbol{w}}_{l,t} = \boldsymbol{w}_{l,t} -\tau \left(\min_{i=1}^{d_l} \boldsymbol{w}_{l,t}[i]\right), \label{equ:translating}   
\end{align}
where $0\leq\tau<1$ is imposed to ensure that all the elements of $\tilde{\boldsymbol{w}}_{l,t}$ are positive. It is not difficult to show that
\begin{align}
  (1-\tau)\sqrt{\epsilon} \leq \tilde{\boldsymbol{w}}_{l,t}[i]\quad \forall i=1,\ldots, d_l. \label{equ:w_lower_bound}    
\end{align}
That is, the subvector $\tilde{\boldsymbol{w}}_{l,t}$ is lower-bounded by $(1-\tau)\sqrt{\epsilon}$. Once all the subvectors $\{\tilde{\boldsymbol{w}}_{l,t}\}_{l=1}^L $ are obtained from (\ref{equ:translating}), the DNN model $\boldsymbol{\theta}$ can be updated in the form of
\begin{align}
\boldsymbol{\theta}^{t} = \boldsymbol{\theta}^{t-1} -\eta\tilde{\boldsymbol{m}}_t/\tilde{\boldsymbol{w}}_t,\nonumber    
\end{align}
where the adaptive stepsizes are $\boldsymbol{\alpha}_t=1/\tilde{\boldsymbol{w}}_t$. 

By inspection of the behaviors of SET-Adam in Figs.~\ref{fig:SETAdam_mean_compare} and \ref{fig:AdamPlus}, it can be seen that the down-translating operation manages to uplift the resulting adaptive stepsizes of SET-Adam in Fig.~\ref{fig:SETAdam_mean_compare}, which implicitly avoids extreme small adaptive stepsizes as expected.    


\subsection{Convergence analysis}

In this paper, we focus on convex optimization for SET-Adam. Our analysis follows a strategy similar to that used to analyse AdaBelief in \cite{Zhuang20Adabelief}. 

\begin{theorem}
Suppose $\{\boldsymbol{\theta}_t\}_{t=0}^T$ and $\{\tilde{\boldsymbol{w}}_t\}_{t=0}^T$ are the iterative updates obtained by running SET-Adam\footnote{$\beta_1$ in Algorithm~1 is generalized to be $\beta_{1t}$, $t\geq 0$ to facilitate convergence analysis. AdaBelief was analyzed in a similar manner. } starting with $(\boldsymbol{m}_0,\boldsymbol{v}_0)=(\boldsymbol{0},\boldsymbol{0})$. Let  $0\leq\beta_{1t}=\beta_1\lambda^t<1, 0\leq\beta_2<1$, and  $\eta_t =\frac{\eta}{\sqrt{t}}$. Assume (1): $f(\boldsymbol{\theta})$ is a differentiable convex function with $\|\boldsymbol{g}_t\|_{\infty}\leq G_{\infty}\sqrt{1-\beta_2}$ for all $t\in [T]$; (2): the updates $\{\boldsymbol{\theta}_t\}_{t=0}^T$ and the optimal solution $\boldsymbol{\theta}^{\ast}$ are bounded by a hyper-sphere, i.e., $\|\boldsymbol{\theta}_t\|_2\leq D$ and $\|\boldsymbol{\theta}^{\ast}\|_2\leq D$; (3): $0<\sqrt{c}\leq \tilde{\boldsymbol{w}}_{t}[i] \leq \tilde{\boldsymbol{w}}_{t-1}[i]$ for all $i\in \{1,\ldots, d\} $ and $t\in [T]$. Denote $\bar{\boldsymbol{\theta}}_{T}=\frac{1}{T}\sum_{t=0}^{T-1} \boldsymbol{\theta}_t$ and ${\boldsymbol{g}_{1:T}^2[i]}=((\boldsymbol{g}_1[i])^2,\ldots, (\boldsymbol{g}_T[i])^2)\in \mathbb{R}^{T}$. We then have the following bound on regret: 
{\small\begin{align}
&f(\bar{\boldsymbol{\theta}}_{T})-f(\boldsymbol{\theta}^{\ast})\leq  \frac{2D^2d(G_{\infty}+\sqrt{\epsilon})}{\eta(1-\beta_1)T} +\frac{2D^2d(G_{\infty}+\sqrt{\epsilon})}{\sqrt{T}(1-\beta_1)\eta} \nonumber \\ &+\frac{(1+\beta_1)\eta\sqrt{1+\log{T}}}{2\sqrt{c}(1-\beta_1)^3 T}\sum_{i=1}^d\left\|\boldsymbol{g}_{1:T}^2[i]\right\|_2 \hspace{-0.7mm}+\hspace{-0.7mm} \frac{2D^2d\beta_1(G_{\infty}+\sqrt{\epsilon})}{(1\hspace{-0.7mm}-\hspace{-0.7mm}\beta_1)(1\hspace{-0.7mm}-\hspace{-0.7mm}\lambda)^2\eta T}. \label{equ:Aida_convex}
\end{align}} 
\label{theorem:convex}
\end{theorem}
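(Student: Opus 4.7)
The plan is to adapt the AdaBelief convex regret analysis, itself an adaptation of Kingma and Ba's original Adam proof, because the three SET-Adam modifications act purely through the denominator sequence $\{\tilde{\boldsymbol{w}}_t\}$: once the two-sided control on this sequence demanded by hypothesis~(3) is in place, the remainder of the argument is coordinate-wise and structurally identical to the AdaBelief one.

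First I would start from convexity, $f(\boldsymbol{\theta}_{t-1}) - f(\boldsymbol{\theta}^{\ast}) \leq \langle \boldsymbol{g}_t, \boldsymbol{\theta}_{t-1} - \boldsymbol{\theta}^{\ast}\rangle$, and work one coordinate at a time. Expanding $(\boldsymbol{\theta}_t[i]-\boldsymbol{\theta}^{\ast}[i])^2$ via the update $\boldsymbol{\theta}_t = \boldsymbol{\theta}_{t-1} - \eta_t\tilde{\boldsymbol{m}}_t/\tilde{\boldsymbol{w}}_t$ and substituting $\boldsymbol{m}_t = \beta_{1t}\boldsymbol{m}_{t-1} + (1-\beta_{1t})\boldsymbol{g}_t$, I would solve for the cross term $\boldsymbol{g}_t[i](\boldsymbol{\theta}_{t-1}[i]-\boldsymbol{\theta}^{\ast}[i])$; the momentum-induced piece $\beta_{1t}\langle\boldsymbol{m}_{t-1},\boldsymbol{\theta}_{t-1}-\boldsymbol{\theta}^{\ast}\rangle$ would then be split off using Young's inequality, exactly as in AdaBelief's argument, so that it can be re-absorbed into two smaller terms. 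Summing over $t=1,\dots,T$, the resulting expression should decompose into four parts that map one-to-one onto the four summands of (\ref{equ:Aida_convex}): (a) a boundary contribution from $\boldsymbol{\theta}_0-\boldsymbol{\theta}^{\ast}$, controlled by $\|\boldsymbol{\theta}_t\|_2\leq D$ and by the upper bound $\tilde{\boldsymbol{w}}_t[i]\leq G_{\infty}+\sqrt{\epsilon}$ (which follows from $\boldsymbol{v}_t[i]\leq (1-\beta_2^t)(1-\beta_2)G_{\infty}^2$, the fact that $\cos^2(\cdot)\leq 1$ in the down-scaling step, the sub-additivity $\sqrt{a+b}\leq\sqrt{a}+\sqrt{b}$ exploited after $\epsilon$-embedding, and the sign of the down-translating step); (b) a telescope-gap $\sum_t (\tilde{\boldsymbol{w}}_t[i]/\eta_t - \tilde{\boldsymbol{w}}_{t-1}[i]/\eta_{t-1})(\boldsymbol{\theta}_{t-1}[i]-\boldsymbol{\theta}^{\ast}[i])^2$, handled via hypothesis~(3) and $\eta_t=\eta/\sqrt{t}$, producing the $\sqrt{T}$-type term; (c) a Young-residual $\sum_t \eta_t\,\boldsymbol{m}_t[i]^2/\tilde{\boldsymbol{w}}_t[i]$, bounded via $\tilde{\boldsymbol{w}}_t[i]\geq\sqrt{c}$ together with the standard Cauchy--Schwarz inequality $\sum_{t=1}^T \boldsymbol{m}_t[i]^2/\sqrt{t} \leq (1-\beta_1)^{-2}\sqrt{1+\log T}\,\|\boldsymbol{g}_{1:T}^2[i]\|_2$ obtained from the EMA representation of $\boldsymbol{m}_t$; and (d) a bias-decay residual coming from $\beta_{1t}=\beta_1\lambda^t$, bounded by a geometric sum that produces the $(1-\lambda)^{-2}$ factor. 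Averaging in $t$ and applying Jensen's inequality to $f$ would then convert the regret sum into the desired bound on $f(\bar{\boldsymbol{\theta}}_T)-f(\boldsymbol{\theta}^{\ast})$.

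The main obstacle will be not this (essentially routine) book-keeping but justifying hypothesis~(3): the angle-based factor $\cos^2(\angle\boldsymbol{v}_{l,t}\boldsymbol{1}_l)$ is not monotone in $t$, so, unlike AMSGrad, SET-Adam has no structural mechanism forcing $\tilde{\boldsymbol{w}}_t[i]$ to be non-increasing. As in the AdaBelief convex analysis, the monotonicity will therefore have to be \emph{postulated} rather than proved, and the theorem really amounts to showing that, conditional on that postulate together with the uniform bounds $\sqrt{c}\leq\tilde{\boldsymbol{w}}_t[i]\leq G_{\infty}+\sqrt{\epsilon}$ and $\|\boldsymbol{g}_t\|_{\infty}\leq G_{\infty}\sqrt{1-\beta_2}$, the four sums (a)--(d) collapse to the four summands on the right-hand side of (\ref{equ:Aida_convex}).
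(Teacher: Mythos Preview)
Your proposal is correct and follows essentially the same route as the paper's proof: expand the weighted squared distance $\|\tilde{\boldsymbol{w}}_t^{1/2}(\boldsymbol{\theta}_t-\boldsymbol{\theta}^{\ast})\|_2^2$, use Young's inequality on the $\beta_{1t}\boldsymbol{m}_{t-1}$ cross term, sum over $t$, and split into the same four pieces (boundary, telescope gap via hypothesis~(3) and $\eta_t=\eta/\sqrt t$, the $\sum_t\eta_t\|\tilde{\boldsymbol{w}}_t^{-1/2}\boldsymbol{m}_t\|_2^2$ term handled by the AdaBelief lemma giving the $\sqrt{1+\log T}$ factor, and the $\beta_1\lambda^t$ geometric residual). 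Your derivation of the upper bound $\tilde{\boldsymbol{w}}_t[i]\le G_\infty+\sqrt{\epsilon}$ via $\cos^2(\cdot)\le1$, $\sqrt{a+b}\le\sqrt a+\sqrt b$, and the non-positivity of the down-translating shift, as well as your observation that monotonicity in hypothesis~(3) must be \emph{assumed} rather than proved, exactly mirror the paper's treatment.
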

\begin{proof}
see Appendix~\ref{appendix:convex_converge} for the proof. 
\end{proof}

\begin{remark}
The condition $0<\sqrt{c}\leq \tilde{\boldsymbol{w}}_t[i]$ in Theorem \ref{theorem:convex} follows directly from (\ref{equ:w_lower_bound}). The assumptions $\tilde{\boldsymbol{w}}_t[i]\leq \tilde{\boldsymbol{w}}_{t-1}[i]$ for all $(i,t)$ are also reasonable as $t$ increases, the gradient-magnitudes tend to approach to zero.    
\end{remark}


\vspace{-0mm}
\section{Experiments}
\label{sec:exp}

We evaluated SET-Adam on three types of DNN tasks: (1) natural language processing (NLP) on training transformer and LSTM models; (2) image classification on training VGG and ResNet \cite{He15ResNet} models; (3) image generation on training WGAN-GP \cite{Gulrajani17WGANGP}.  Two open-source repositories\footnote{
``https://github.com/jadore801120/attention-is-all-you-need-pytorch" is adopted for the task of training a transformer, which produces reasonable validation performance using Adam.
``https://github.com/juntang-zhuang/Adabelief-Optimizer" is adopted for all the remaining tasks. The second open source is the original implementation of AdaBelief \cite{Zhuang20Adabelief}. } were used for the above DNN training tasks. To demonstrate the effectiveness of the proposed method, eight adaptive optimizers from the literature were tested and compared, namely Yogi  \cite{Zaheer18Yogi}, RAdam \cite{Liu19RAdam}, MSVAG  \cite{Balles17MSVAG}, Fromage \cite{Bernstein20Fromage}, Adam \cite{Kingma17}, AdaBound \cite{Luo19AdaBound},  AdamW \cite{Loshchilov19AdamW}, and AdaBelief \cite{Zhuang20Adabelief}. In addition, SGD with momentum was evaluated as a baseline for performance comparison. In all experiments, the additional parameter $\tau$ in SET-Adam was set to $\tau=0.5$, and not tunned for each DNN task for simplicity.  

It is found that SET-Adam outperforms the eight adaptive optimizers for training transformer, LSTM, VGG11, and ResNet34 models while it matches the best performance of the eight methods for training WGAN-GP. The non-adaptive method SGD with momentum produce good performance only when training VGG11 and ResNet34. Lastly, experiments on training ResNet18 on the large ImageNet dataset show that SET-Adam outperforms Adam and AdaBelief. 


The time complexity of SET-Adam was evaluated for training VGG11 and ResNet34 on a 2080 Ti GPU. In brief, SET-Adam consumed $12\%-20\%$ more time per epoch compared to Adam.   

\subsection{On training a transformer}
 In this task, we consider the training of a transformer for WMT16: multimodal translation by using the first open-source as indicated in the footnote.  In the training process, we retained almost all of the default hyper-parameters provided in the open-source except for the batch size. Due to limited GPU memory, we changed the batch size from 256 to 200.  
The parameters of SET-Adam were set to  $(\eta_0,\beta_1, \beta_2, \epsilon, \tau)=(0.001, 0.9, 0.98, 1e-15, 0.5)$. The parameter-setups for other optimizers can be found in Table~\ref{tab:setup_transformer} of Appendix~\ref{appendix:parameterSetups}, where certain hyper-parameters for each optimizer were searched over some discrete sets to optimize the validation performance. For example, the parameter $\epsilon$ of Adam was searched over the set $\{1e-6,1e-7,\ldots, 1e-12\}$ while the remaining parameters were set to $(\eta_0,\beta_1,\beta_2)=(0.001, 0.9, 0.98)$ as in SET-Adam.  Once the optimal parameter-configuration for each optimizer was obtained by searching, three experimental repetitions were then performed to alleviate the effect of the randomness. 

It is clear from Table~\ref{tab:transformer_val_acc} that SET-Adam significantly outperforms all other methods. We emphasize that the maximum number of epochs was set to 400 for each optimizer by following the default setup in the open source, and no epoch cutoff is performed in favor of SET-Adam. Fig.~\ref{fig:trans_comapre} demonstrates that SET-Adam not only converges faster in the training process than Adam but also produces much better validation accuracy. On the other hand, the non-adaptive method SGD with momentum produces a performance that is inferior to all adaptive methods except Fromage and MSVAG.  

\begin{table}[t!]
\caption{Performance comparison for
training the transformer.}
\label{tab:transformer_val_acc}
\centering
  \begin{tabular}{|c|c||c|c|}
  \hline
  \hspace{-3.5mm}  {\footnotesize $\begin{array}{c}\textrm{SGD}\\ \textrm{(non-adaptive)}\end{array}$} \hspace{-3.5mm} & \hspace{-2mm} \footnotesize{55.58$\pm$0.34}
    \hspace{-2mm} & \hspace{-2mm} 
  {\footnotesize AdaBound}  \hspace{-2mm} & \hspace{-2mm} \footnotesize{55.90$\pm$0.21}    
   \hspace{-2mm} \\ 
  \hline
   \hspace{-3.5mm} \footnotesize{Yogi} \hspace{-3.5mm} & \hspace{-2mm} \footnotesize{60.47$\pm$0.61} \hspace{-2mm} & \hspace{-2mm}
  \footnotesize{RAdam} \hspace{-2mm} & \hspace{-2mm} \footnotesize{64.47$\pm$0.19} \hspace{-2mm}   
  \\ \hline
  \hspace{-3.5mm} \footnotesize{MSVAG} \hspace{-3.5mm} & \hspace{-2mm} \footnotesize{53.79$\pm$0.13} 
   \hspace{-2mm} & \hspace{-2mm} \footnotesize{Fromage} \hspace{-2mm} & \hspace{-2mm} \footnotesize{35.57$\pm$0.19} \hspace{-2mm}
   \\ \hline 
   \hspace{-3mm}  \footnotesize{AdamW} \hspace{-2mm} & \hspace{-3.5mm} \footnotesize{64.49$\pm$0.24}
     \hspace{-3.5mm} & \hspace{-2mm}  \footnotesize{AdaBelief} \hspace{-2mm} & \hspace{-2mm} \footnotesize{66.90$\pm$0.77} \hspace{-2mm}
   \\    \hline 
   \hspace{-3.5mm} \footnotesize{Adam} \hspace{-3.5mm} & \hspace{-2mm} \footnotesize{64.71$\pm$0.57} 
   \hspace{-2mm} & \hspace{-2mm}  
  \footnotesize{SET-Adam (\textbf{Our})} \hspace{-2mm} & \hspace{-2mm} \footnotesize{\textbf{69.19}$\pm$0.09} \hspace{-2mm}
    \\ \hline

  \end{tabular}
\end{table}

\begin{figure}[t!]
\centering
\includegraphics[width=70mm]{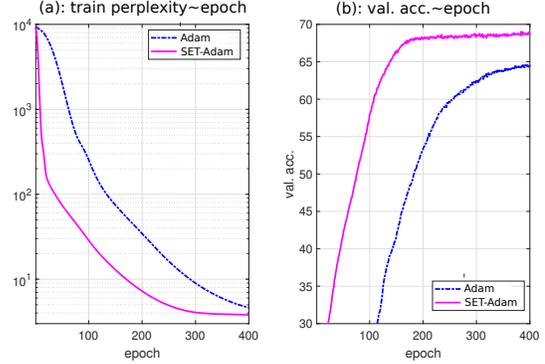}
\vspace*{-0.0cm}
\caption{\small{Performance visualisation of Adam and SET-Adam for the training of the transformer. } } 
\label{fig:trans_comapre}
\vspace*{-0.0cm}
\end{figure}

\begin{table*}[t!]
\caption{\small  Validation perplexity on Penn Treebank for 1, 2, 3-layer LSTM. \textbf{lower} is better.  \vspace{-0mm} } 
\label{tab:LSTM}
\centering
\begin{tabular}{|c|c|c|c|c|c|}
\hline
& \hspace{0mm} {\footnotesize $\begin{array}{c}\textrm{SGD}\\ \textrm{(non-adaptive)}\end{array}$} \hspace{0mm}
& \hspace{0mm}{\footnotesize   AdaBelief}   \hspace{0mm} & {\footnotesize AdamW }
 & {\footnotesize Yogi}  & \footnotesize{AdaBound} \\
\hline 
\footnotesize{1 layer} & 
\footnotesize{ 85.52 } &
\footnotesize{ 84.21} &
\footnotesize{88.36} & \footnotesize{86.78} & \footnotesize{84.52}
\\ \hline 
\footnotesize{2 layer} &
\footnotesize{67.44} &
\footnotesize{66.29} &
\footnotesize{73.18}   &  \footnotesize{71.56}  & \footnotesize{67.01}
\\ \hline 
\footnotesize{3 layer} &
\footnotesize{63.68} &
\footnotesize{61.23} &
\footnotesize{70.08}   &  \footnotesize{67.83}  & \footnotesize{63.16} \\ 
\hline
\hline
& \footnotesize{SET-Adam}  & \footnotesize{Adam} & {\footnotesize RAdam}
 & {\footnotesize MSVAG} & {Fromage}
 \\
\hline
\footnotesize{1 layer}& \footnotesize{\textbf{78.54}} 
& \footnotesize{84.28}
 & \footnotesize{88.76}
 & \footnotesize{84.75}
 & \footnotesize{85.20}
 \\
\hline
\footnotesize{2 layer}& \footnotesize{\textbf{64.80}}
& \footnotesize{66.86}
 & \footnotesize{74.12}
 & \footnotesize{ 68.91}
 & \footnotesize{ 72.22} 
 \\
\hline
\footnotesize{3 layer}& \footnotesize{\textbf{60.86}}
& \footnotesize{64.28}
 & \footnotesize{70.41}
 & \footnotesize{65.04}
 & \footnotesize{67.37}
 \\
\hline
\end{tabular}
\vspace{-1mm}
\end{table*}

\begin{figure*}[t!]
\centering
\includegraphics[width=140mm]{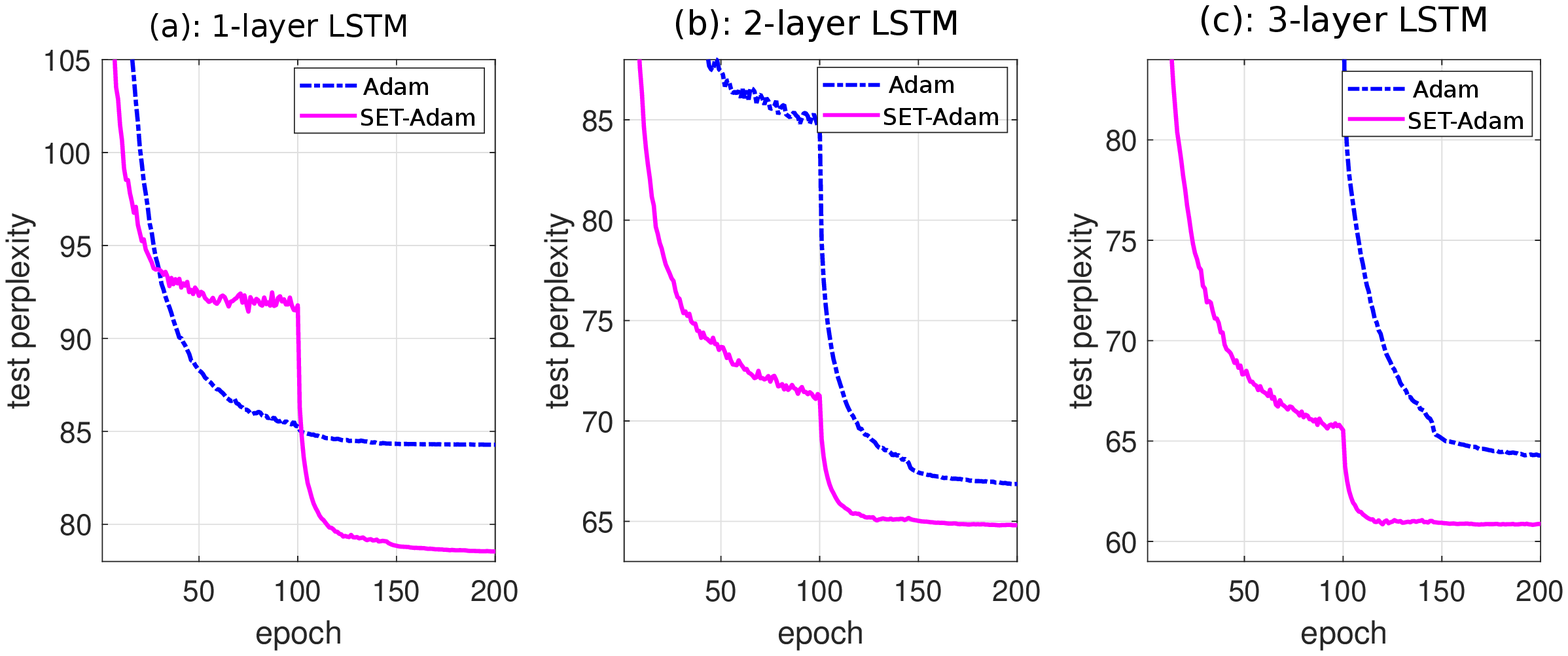}
\vspace*{-0.0cm}
\caption{\footnotesize{Performance visualisation of SET-Adam and Adam in Table~\ref{tab:LSTM}. } }
\label{fig:LSTM_comapre}
\vspace*{-0.3cm}
\end{figure*}

\begin{table*}[t]
\caption{\footnotesize Validation accuracies (in percentage) and \textcolor{blue}{time complexity in seconds per epoch (referred to as \emph{t. c.})} of ten methods for training VGG11 and ResNet34 over CIFAR10 and CIFAR100. The best result is highlighted in each column. \vspace{0mm}
} 
\label{tab:val_acc_imgclass}
\centering
\begin{tabular}{|c|c|c|c|c||c|c|c|c|}
\cline{2-9} 
\multicolumn{1}{c|}{} & \multicolumn{4}{|c||}{{\footnotesize CIFAR10 }} & \multicolumn{4}{|c|}{{\footnotesize CIFAR100 }}      \\
\cline{2-9} 
 \multicolumn{1}{c|}{} & \multicolumn{2}{|c|}{\footnotesize{VGG11}} & \multicolumn{2}{|c||}{\footnotesize{ResNet34}} & \multicolumn{2}{|c|}{\footnotesize{VGG11}} & \multicolumn{2}{|c|}{\footnotesize{ResNet34}} 
 \\ 
 \hline 
 \multicolumn{1}{|c|}{{\footnotesize optimizers}} & \footnotesize{val. acc} & \footnotesize{t.~c.} & \footnotesize{val. acc} & \footnotesize{t.~c.} &
 \footnotesize{val. acc} & \footnotesize{t.~c.} &
 \footnotesize{val. acc} & \footnotesize{t.~c.} 
 \\
 \hline
{\footnotesize $\begin{array}{c}\textrm{SGD}\\ \textrm{(non-adaptive)}\end{array}$} \hspace{-2mm} & \hspace{-2mm} \footnotesize{91.36$\pm$0.07} & \footnotesize{\textbf{5.83}} \hspace{-2mm} & \hspace{-2mm} {\footnotesize{\textbf{95.48}$\pm$0.11}} & \footnotesize{\textbf{30.45}} \hspace{-2mm} & \footnotesize{ 67.02$\pm$0.25} & \footnotesize{\textbf{5.85}} &  {\footnotesize{\textbf{78.10}$\pm$0.18}} & \footnotesize{\textbf{30.92}} 
 \\ 
\hline
\hspace{-2mm} {\footnotesize Yogi} \hspace{-2mm} & \hspace{-2mm} \footnotesize{90.74$\pm$0.16} & \footnotesize{6.49}  \hspace{-2mm} & \hspace{-2mm} \footnotesize{94.98$\pm$0.26}  & \footnotesize{31.74}  \hspace{-2mm} &  \footnotesize{65.57$\pm$0.17} & \footnotesize{6.42} &  \footnotesize{77.17}$\pm$0.12 & \footnotesize{32.20} 
\\ 
\hline 
{\footnotesize RAdam} \hspace{-2mm} & \hspace{-2mm} \footnotesize{89.58$\pm$}0.10 & \footnotesize{6.28} \hspace{-2mm} & \hspace{-2mm} \footnotesize{94.64$\pm$0.18} & \footnotesize{31.21} \hspace{-2mm} & \footnotesize{63.62$\pm$0.20} & \footnotesize{6.29} & \footnotesize{74.87}$\pm$0.13 & \footnotesize{31.58} 
\\
\hline 
{\footnotesize MSVAG} \hspace{-2mm} & \hspace{-2mm} \footnotesize{90.04$\pm$0.22} & \footnotesize{7.08} \hspace{-2mm} & \hspace{-2mm} \footnotesize{94.65$\pm$0.08} & \footnotesize{33.78} \hspace{-2mm} & \footnotesize{62.67$\pm$0.33} & \footnotesize{7.19} & \footnotesize{75.57}$\pm$0.14
& \footnotesize{33.80}
\\
 \hline 
{\footnotesize Fromage} \hspace{-2mm} & \hspace{-2mm} \footnotesize{89.72$\pm$0.25} & \footnotesize{6.66} \hspace{-2mm} & \hspace{-2mm}  \footnotesize{94.64$\pm$0.07} & \footnotesize{35.19} \hspace{-2mm} & \footnotesize{62.93$\pm$0.53} & \footnotesize{6.56} & \footnotesize{74.84}$\pm$0.27 & \footnotesize{35.50} 
\\
 \hline 
{\footnotesize AdamW} \hspace{-2mm} & \hspace{-2mm}   \footnotesize{89.46$\pm$0.08} & \footnotesize{6.25} \hspace{-2mm} & \hspace{-2mm} \footnotesize{94.48$\pm$0.18} & \footnotesize{31.71} \hspace{-2mm} & \footnotesize{62.50$\pm$0.23} & \footnotesize{6.31} & \footnotesize{74.29$\pm$0.20} & \footnotesize{31.80} 
\\
 \hline 
 {\footnotesize AdaBound}  \hspace{-2mm} & \hspace{-2mm} {\footnotesize{90.48$\pm$0.12}} & \footnotesize{6.71}  \hspace{-2mm} & \hspace{-2mm}  \footnotesize{94.73$\pm$0.16} & \footnotesize{33.75} \hspace{-2mm} & {\footnotesize{64.80$\pm$0.42}} 
 & \footnotesize{6.73} & \footnotesize{76.15}$\pm$0.10 & \footnotesize{33.78}  
\\
\hline
 {\footnotesize AdaBelief}  \hspace{-2mm} & \hspace{-2mm} {\footnotesize{91.55$\pm$0.13}} & \footnotesize{6.47}  \hspace{-2mm} & \hspace{-2mm}  \footnotesize{95.15$\pm$0.11} & \footnotesize{31.66} \hspace{-2mm} & {\footnotesize{68.05$\pm$0.31}} 
 & \footnotesize{6.49} & \footnotesize{77.32}$\pm$0.37 & \footnotesize{31.74}  
\\
\hline
\hline
{\footnotesize Adam} \hspace{-2mm} & \hspace{-2mm} \footnotesize{91.20$\pm$0.21} & \footnotesize{6.15} \hspace{-2mm} & \hspace{-2mm} \footnotesize{95.09$\pm$0.18} & \footnotesize{31.28} \hspace{-2mm} & \footnotesize{67.88$\pm$0.13} & \footnotesize{6.20} & \footnotesize{77.31}$\pm$0.14 & \footnotesize{31.47} 
\\ 
\hline
 {\footnotesize SET-Adam (\textbf{our})} \hspace{-2mm} & \hspace{-2mm} {\footnotesize{\textbf{91.89}$\pm$0.17}} & \footnotesize{7.40} \hspace{-2mm} & \hspace{-2mm} {\footnotesize{{95.47}$\pm$0.06}} & \footnotesize{36.25}  \hspace{-2mm}  & {\footnotesize{\textbf{69.93}$\pm$0.20}} & \footnotesize{7.21} & {\footnotesize{{77.75}}$\pm$0.45} & \footnotesize{35.10}
 \\ \hline
\end{tabular}
\vspace{-0.5mm}
\end{table*}

\vspace{-3mm}
\begin{table*}[h!]
\caption{\small Best FID obtained for each optimizer (\textbf{lower}
is better)   \vspace{0mm} } 
\label{tab:GAN_FID}
\centering
\begin{tabular}{|c|c|c|c|c|c|}
\hline
& \hspace{0mm} {{\footnotesize Adam }}\hspace{0mm}
&  {\footnotesize AdaBelief}
& {\footnotesize AdamW} &  {\footnotesize RAdam}
& {\footnotesize AdaBound}
\\ \hline
\footnotesize{best FIDs} &
\footnotesize{66.71} & 
  \footnotesize{\textbf{56.73}} & 63.76 & \footnotesize{69.14} & \footnotesize{61.65}   \\
\hline 
\hline
&  {\footnotesize SET-Adam} & {\footnotesize MSVAG} & {\footnotesize SGD} & {\footnotesize Yogi} & {\footnotesize Fromage}  \\
\hline
\footnotesize{best FIDs} &  \footnotesize{57.42} &  \footnotesize{69.47} & \footnotesize{90.61} & \footnotesize{68.34} &  \footnotesize{78.47}
\\
\hline
\end{tabular}
\end{table*}

\subsection{On training LSTMs}
In this experiment, we consider training LSTMs with a different number of layers over the Penn TreeBank dataset \cite{Marcus93PennTree}. The detailed experimental setup such as dropout rate and gradient-clipping magnitude can be found in the first open-source repository provided in the footnote. 
Similar to the task of training the transformer, the  optimizers have both fixed and free parameters of which the free parameters remain to be searched over some discrete sets. See Table~\ref{tab:setup_LSTM} in Appendix~\ref{appendix:parameterSetups} for a summary of the fixed and free parameters for each optimizer. An example is Adam for which $\eta_0\in \{0.01, 0.001\}$ and $\epsilon\in \{1e-6, 1e-8, 1e-10, 1e-12\}$ were tested to find the optimal configuration that produces the best validation performance.   
  
Table.~\ref{tab:LSTM} summarises the obtained validation perplexities of the ten methods for training 1, 2, and 3-layer LSTMs. It was found that for each optimizer, independent experimental repetitions lead to almost the same validation perplexity value. Therefore, we only keep the average of the validation perplexity values from three independent experimental repetitions for each experimental setup in the table and ignore the standard deviations.

It is clear from Table.~\ref{tab:LSTM} that SET-Adam again outperforms all other methods in all three scenarios, which may be due to the contribution of a compact range of adaptive stepsizes in SET-Adam. Fig.~\ref{fig:LSTM_comapre} further visualised the validation performance of SET-Adam compared to Adam. The performance gain of SET-Adam is considerable in all three scenarios.   

\vspace{-2mm}
\subsection{On training VGG11 and ResNet34 over CIFAR10 and CIFAR100}\vspace{-1mm}
In this task, the ten optimizers were evaluated by following  a similar experimental setup as in \cite{Zhuang20Adabelief}. The batch size and epoch were set to 128 and 200, respectively. The common stepsize $\eta_t$ is reduced by multiplying by 0.1 at 100 and 160 epoch.  The detailed parameter-setups for the optimizers can be found in Table~\ref{tab:setup_VGGResNet} in Appendix~\ref{appendix:parameterSetups}.  Three experimental repetitions were conducted for each optimizer to alleviate the effect of randomness.     

Both the validation performance and the algorithmic complexity are summarised in Table~\ref{tab:val_acc_imgclass}. It is clear that SET-Adam consistently outperforms the eight \textcolor{blue}{adaptive} optimizers in terms of validation accuracies. This demonstrates that the compact range of adaptive stepsizes in SET-Adam does indeed improve the generalization performance. The non-adaptive optimizer SGD with momentum demonstrates good performance, confirming the findings in \cite{He15ResNet, Wilson17AdamNegative} that this optimizer dominates over adaptive optimizers in image classification tasks. 

We can also conclude from the table that SGD with momentum is the most computationally efficient method. On the other hand, due to the three operations, SET-Adam consumed an additional $12\%-20\%$ time per epoch compared to Adam.

\vspace{-2mm}
\subsection{On training WGAN-GP over CIFAR10}
\vspace{-1mm}
This task focuses on training WGAN-GP. The parameters of SET-Adam were set to $(\eta_t, \beta_1,\beta_2, \epsilon, \tau)=(0.0002, 0.5, 0.999, 1e-11, 0.5)$.
The parameter-setups of other optimizers can be found in Table~\ref{tab:setup_WGANGP} of Appendix~\ref{appendix:parameterSetups}. As an example, the parameter $\epsilon$ of Adam is searched over the discrete set $\{1e-4, 1e-6,\ldots, 1e-14\}$. 
For each parameter-configuration of an optimizer, three experimental repetitions were performed due to the relatively unstable Frechet inception distance (FID) scores in training WGAN-GP. 

Table~\ref{tab:GAN_FID} shows the best FID for each method. Considering Adam for example, it has six parameter-configuration due to six $\epsilon$ values being tested. As a result, the best FID for Adam is obtained over 18 values, accounting for three experimental repetitions for each of six $\epsilon$ values. It can be seen from the table that SET-Adam provides comparable performance to AdaBelief, while the other methods including Adam perform significantly worse.

\vspace{-1mm}
\subsection{On training ResNet18 over ImageNet} 
In the last experiment, we investigated the performance gain of SET-Adam compared to Adam and AdaBelief for training ResNet18 on the large ImageNet dataset. The maximum epoch and minibatch size were set to 90 and 256, respectively. The common stepsize $\eta_t$ is dropped by a factor of 0.1 at 70 and 80 epochs.  The parameter setup for the three optimizers can be found in Table~\ref{tab:setup_imagenet} of Appendix~\ref{appendix:parameterSetups}. 


\begin{figure}[t!]
\centering
\includegraphics[width=80mm]{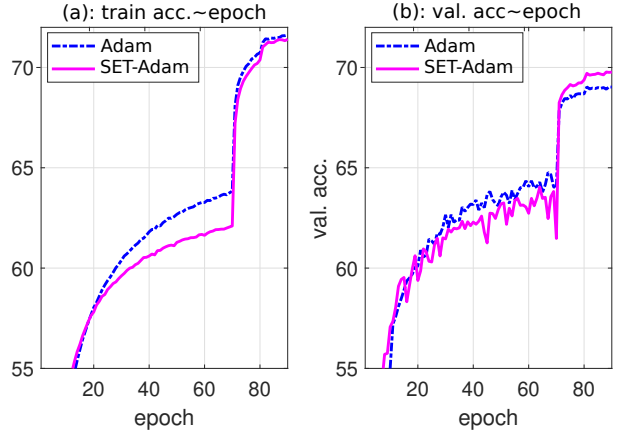}
\vspace*{-0.0cm}
\caption{\small{Performance visualisation of Adam and SET-Adam for the training of ResNet18 over ImageNet. } } 
\label{fig:imagenet_comapre}
\vspace*{-0.5cm}
\end{figure}

\begin{table}[t!]
\caption{\small Validation accuracies (in percentage) of the optimizers for training ResNet18 over ImageNet.    \vspace{-0mm} } 
\label{tab:imagenet}
\centering
\begin{tabular}{|c|c|c|c|}
\hline
 {\footnotesize optimizers} &
 \hspace{0mm} {{\footnotesize AdaBelief }}\hspace{0mm} 
 &
 \hspace{0mm} {{\footnotesize Adam }}\hspace{0mm}
&   \hspace{0mm} {{\footnotesize SET-Adam }}\hspace{0mm}  
\\ \hline
\footnotesize{val. acc.}  &  \footnotesize{69.65} & \footnotesize{69.03}  & \footnotesize{\textbf{69.77}}
\\ \hline
\end{tabular}
\vspace*{-0.5cm}
\end{table}

It is clear from Table~\ref{tab:imagenet} that for the large ImageNet dataset, SET-Adam again performs better than Adam and AdaBelief, indicating that the performance gain of SET-Adam is robust against different sizes of datasets.

Fig.~\ref{fig:imagenet_comapre} visualizes the training and validation curves of Adam and SET-Adam. Interestingly, it is seen that Adam exhibits better training accuracy and worse validation accuracy than SET-Adam. The above results are consistent with the findings in \cite{Wilson17AdamNegative}. In particular,  in \cite{Wilson17AdamNegative}, the authors empirically found that Adam performs better than SGD with momentum in the training process while producing worse validation performance in different DNN tasks. The validation performance gain of SET-Adam in Fig.~\ref{fig:imagenet_comapre} can be explained by the property that SET-Adam is designed to be closer to SGD with momentum.       

\vspace{-0mm}
\section{Conclusions}
\vspace{-0mm}


In this paper, we have shown that the range of the adaptive stepsizes of DNN optimizers has a significant impact on performance. The proposed SET-Adam optimizer suppresses the range of the adaptive stepsizes of Adam making it closer to SGD with momentum. Our experimental results indicate that SET-Adam will be able to produce better performance across a wide range of DNN-based applications.

In the design of the SET-Adam optimizer, we have proposed to perform a sequence of three operations on the second momentum $\boldsymbol{v}_t$ at iteration $t$ for the purpose of reducing the range of adaptive stepsizes in Adam while avoiding extreme small ones. We realize the down-scaling operation in SET-Adam by exploiting the angles between the layerwise subvectors $\{\boldsymbol{v}_{l,t}\}_{l=1}^L$ of $\boldsymbol{v}_t$ and the corresponding all-one vectors $\{\boldsymbol{1}_l\}_{l=1}^L$.

Our empirical study shows that SET-Adam outperforms eight \textcolor{blue}{adaptive} optimizers including Adam and AdaBelief for training transformer, LSTM, VGG11, and ResNet34 models while at the same time it matches the best performance of the eight adaptive methods for training WGAN-GP models. In addition, experiments on training ResNet18 over the large ImageNet dataset show that SET-Adam performs better than Adam and AdaBelief.  On the other hand, it was found that SGD with momentum only produces good performance in image classification tasks. This suggests that the \emph{adaptivity} of SET-Adam is important, allowing the method to effectively train different types of DNN models.

\appendices

\onecolumn


\section{Verification that AdaBelief also utilizes the $\epsilon$-embedding operation}
\label{appendix:AdaBelief_embedding}
 The update expressions of AdaBelief are given by \cite{Zhuang20Adabelief}
\begin{align}
\hspace{-3mm}[\textbf{AdaBelief}]\hspace{0mm}&\left\{\begin{array}{l}
\hspace{-2mm}\boldsymbol{m}_t = \beta_1\boldsymbol{m}_{t-1}\hspace{-0.5mm} +\hspace{-0.5mm} (1\hspace{-0.5mm}-\hspace{-0.5mm}\beta_1)\boldsymbol{g}_t \\
\hspace{-2mm}\boldsymbol{s}_t=\beta_2 \boldsymbol{s}_{t-1}\hspace{-0.6mm}+\hspace{-0.6mm}(1\hspace{-0.6mm}-\hspace{-0.6mm}\beta_2)(\boldsymbol{m}_t\hspace{-0.6mm}-\hspace{-0.6mm}\boldsymbol{g}_t)^2+\textcolor{blue}{\epsilon}  \\
\hspace{-2mm}\boldsymbol{\theta}_t = \boldsymbol{\theta}_{t-1}  -\eta_t\frac{1}{1-\beta_1^t}\frac{\boldsymbol{m}_t}{\sqrt{\boldsymbol{s}_t/(1-\beta_2^t)}+\textcolor{blue}{\epsilon}}
\end{array}\right.\hspace{-2mm},
\label{equ:AdaBelief}
\end{align}
where the parameter $\epsilon$ is involved in the computation of both $\boldsymbol{s}_t$ and $\boldsymbol{\theta}_t$ in AdaBelief, which is different from that of Adam. \textcolor{blue}{The second $\epsilon$ for computing $\boldsymbol{\theta}_t$ can be ignored in (\ref{equ:AdaBelief}) since the first $\epsilon$ dominates the second one.}

By simple algebra, one can show that (\ref{equ:AdaBelief}) can be reformulated to be
\begin{align}
\hspace{-3mm}[\textbf{AdaBelief}]\hspace{0mm}&\left\{\begin{array}{l}
\hspace{-2mm}\boldsymbol{m}_t = \beta_1\boldsymbol{m}_{t-1}\hspace{-0.5mm} +\hspace{-0.5mm} (1\hspace{-0.5mm}-\hspace{-0.5mm}\beta_1)\boldsymbol{g}_t \\
\hspace{-2mm}\boldsymbol{s}_t=\beta_2 \boldsymbol{s}_{t-1}\hspace{-0.6mm}+\hspace{-0.6mm}(1\hspace{-0.6mm}-\hspace{-0.6mm}\beta_2)(\boldsymbol{m}_t\hspace{-0.6mm}-\hspace{-0.6mm}\boldsymbol{g}_t)^2  \\
\hspace{-2mm}\boldsymbol{\theta}_t = \boldsymbol{\theta}_{t-1}  -\eta_t\frac{1}{1-\beta_1^t}\frac{\boldsymbol{m}_t}{\sqrt{\boldsymbol{s}_t/(1-\beta_2^t)+\textcolor{blue}{\epsilon/(1-\beta_2)}}}
\end{array}\right.\hspace{-2mm},
\label{equ:AdaBelief_2nd}
\end{align}
where the second $\epsilon$ is ignored. The above expression shows that AdaBelief indeed utilizes the  $\epsilon$-embedding operation as we do.


\vspace{-2mm}
\section{Update procedure of Adam$^\star$ }
\label{appendix:adamplus}
The parameter $\epsilon$ is put inside of the sqrt $\sqrt{\cdot}$ operation to verify if Adam$^{\star}$ has a small range of adaptive stepsizes than Adam. 
\label{appendix:adamplus}
\begin{algorithm}[h!]
   \caption{\small Adam$^\star$}
   \label{alg:Adamplus}
\begin{algorithmic}[1]
   \STATE {\small {\bfseries Input:} $\beta_1$, $\beta_2$,  $\eta_t$, $\epsilon > 0$ }
   \STATE {\small {\bfseries  Init.:} $\boldsymbol{\theta}_0\hspace{-0.5mm}\in\hspace{-0.5mm} \mathbb{R}^d$,  $\boldsymbol{m}_0 \hspace{-0.5mm}=\hspace{-0.5mm} 0$, $\boldsymbol{v}_{0}=0 \in \mathbb{R}^d$ }
   \FOR{\small $t=1, 2, \ldots, T$}
   \STATE \hspace{-0mm}{\small  $\boldsymbol{g}_t \leftarrow \nabla f({\boldsymbol{\theta}}_{t-1}) $  }
   \STATE \hspace{-0mm}{\small $\boldsymbol{m}_{t} \leftarrow \beta_1 \boldsymbol{m}_{t-1}  + (1-\beta_1) \boldsymbol{g}_t$ }
    \STATE \hspace{-0mm}{\small ${v}_{t}  \leftarrow \beta_2 v_{t-1} + (1-\beta_2) \boldsymbol{g}_{t}^2$}
   \STATE \hspace{-0mm}{\small $  \tilde{\boldsymbol{m}}_{t} \hspace{-0.6mm}\leftarrow \frac{\boldsymbol{m}_{t}}{1-\beta_1^{t}}\quad   \tilde{v}_{t} \leftarrow \frac{v_{t}}{1-\beta_2^{t}} $ }     
  \STATE {\small  $ \boldsymbol{\theta}_{t} \hspace{-0.6mm}\leftarrow \boldsymbol{\theta}_{t-1} -\frac{\eta_t }{\sqrt{\tilde{v}_{t}\textcolor{blue}{+\epsilon}}} \tilde{\boldsymbol{m}}_{t} $}
   \ENDFOR 
   \STATE {\bfseries Output:} {\small $\boldsymbol{\theta}_{T}$  }
   \\ 
\end{algorithmic}
\end{algorithm}

\vspace{-2mm}
\section{Parameter Setups for Optimization Methods in  Fig.~\ref{fig:SETAdam_mean_compare}-\ref{fig:SETAdam_std_compare} and Fig.~\ref{fig:AdamPlus}}
\label{appendix:fig_setup}

The common stepsize $\eta_t$ is dropped by a factor 0.1 at 100 and 160 epochs.  The optimal parameter $\epsilon$ is searched over the set $\{10^{-2}, 10^{-3},\ldots, 10^{-8}\}$ for Adam. 
\begin{table}[h!]
\label{tab:setup_fig}
\centering
\begin{tabular}{|c|c|c|}
\hline
\footnotesize{optimizer} & \footnotesize{fixed parameters} & \footnotesize{searched parameters}  \\ \hline
\hspace{-4.5mm} 
\footnotesize{Adam} \hspace{-4.5mm} & \hspace{-4.5mm} 
\footnotesize{$\begin{array}{l}(\eta_0,\beta_1, \beta_2) \\
=(0.001,0.9, 0.999)\end{array}$} \hspace{-0.5mm} & \hspace{-3.5mm}\footnotesize{$\begin{array}{c} \epsilon\in\{10^{-2}, 10^{-3}, \ldots,  10^{-8}\} \end{array}$} \hspace{-4.5mm} 
\\
\hline 
\hspace{-4.5mm} 
\footnotesize{Adam$^\star$} \hspace{-4.5mm} & \hspace{-4.5mm}
\footnotesize{$\begin{array}{l}(\eta_0,\beta_1, \beta_2, \epsilon)\\
=(0.001, 0.9, 0.999, 10^{-5})\end{array}$} \hspace{-0.5mm}  & \hspace{-4.5mm} 
\\
\hline
\hspace{-0.5mm} 
\footnotesize{$\begin{array}{c}\textrm{SET-Adam} \\ \textrm{(w. o. down-translating)} \end{array}$}  \hspace{-0.5mm} & \hspace{-0.5mm} \footnotesize{$\begin{array}{l}(\eta_0,\beta_1, \beta_2, \epsilon, \textcolor{blue}{\tau})\\
=(0.001, 0.9, 0.999, 10^{-5}, \textcolor{blue}{0.0})\end{array}$} \hspace{-0.5mm} & \hspace{-4.5mm}
\\
\hline
\hspace{-4.5mm} 
\footnotesize{$\begin{array}{c}\textrm{SET-Adam} \end{array}$}  \hspace{-4.5mm} & \hspace{-0.5mm} \footnotesize{$\begin{array}{l}(\eta_0,\beta_1, \beta_2, \epsilon, \tau)\\
=(0.001, 0.9, 0.999, 10^{-5}, 0.5)\end{array}$} \hspace{-0.5mm} & \hspace{-4.5mm}
\\
\hline
\end{tabular}
\vspace{-1mm}
\end{table}

\section{proof of Theorem~1}
\label{appendix:convex_converge}
\begin{proof}


Firstly, we note that the first bias term $1-\beta_1^t$ in the update expressions of SET-Adam in Algorithm~1 can be absorbed into the common stepsize $\eta_t$. Therefore, we will ignore the first bias term in the following proof. 
Suppose $\boldsymbol{\theta}^{\ast}$ is the optimal solution for solving the convex optimization problem, i.e., $\boldsymbol{\theta}^{\ast}=\arg\min_{\boldsymbol{\theta}_t} f(\boldsymbol{\theta})$. Using the fact that $\boldsymbol{\theta}_t = \boldsymbol{\theta}_{t-1}-\eta_t \tilde{\boldsymbol{w}}_t^{-1}\boldsymbol{m}_t$, we have
\begin{align}
&\hspace{-65mm}\|\tilde{\boldsymbol{w}}_t^{1/2}(\boldsymbol{\theta}_t-\boldsymbol{\theta}^{\ast}) \|_2^2 \nonumber \\
&\hspace{-65mm}= 
\|\tilde{\boldsymbol{w}}_t^{1/2}(\boldsymbol{\theta}_{t-1}-\eta_t\tilde{\boldsymbol{w}}_{t}^{-1}\boldsymbol{m}_t-\boldsymbol{\theta}^{\ast})  \|_2^2 \nonumber \\
&\hspace{-65mm}=\|\tilde{\boldsymbol{w}}_t^{1/2}(\boldsymbol{\theta}_{t-1}-\boldsymbol{\theta}^{\ast}) \|_2^2+\eta_t^2\|\tilde{\boldsymbol{w}}_t^{-1/2}\boldsymbol{m}_t\|_2^2 \nonumber \\
& \hspace{-65mm} \hspace{5mm}- 2\eta_t\langle \beta_{1t}\boldsymbol{m}_{t-1}+(1-\beta_{1t})\boldsymbol{g}_t, \boldsymbol{\theta}_{t-1}-\boldsymbol{\theta}^{\ast} \rangle \nonumber \\
&\hspace{-65mm}=\|\tilde{\boldsymbol{w}}_t^{1/2}(\boldsymbol{\theta}_{t-1}-\boldsymbol{\theta}^{\ast}) \|_2^2+\eta_t^2\|\tilde{\boldsymbol{w}}_t^{-1/2}\boldsymbol{m}_t\|_2^2 \hspace{-0.6mm} \nonumber \\
&\hspace{-65mm}\hspace{5mm}-\hspace{-0.6mm} 2\eta_t(1\hspace{-0.6mm}-\hspace{-0.6mm}\beta_{1t})\langle \boldsymbol{g}_t, \boldsymbol{\theta}_{t-1}\hspace{-0.6mm}-\hspace{-0.6mm}\boldsymbol{\theta}^{\ast} \rangle \hspace{-0.6mm}-\hspace{-0.6mm} 2\eta_t\beta_{1t}\langle \boldsymbol{m}_{t-1},  \boldsymbol{\theta}_{t-1}\hspace{-0.6mm}-\hspace{-0.6mm}\boldsymbol{\theta}^{\ast} \rangle \nonumber  \\
&\hspace{-65mm}\hspace{-0mm}\leq \|\tilde{\boldsymbol{w}}_t^{1/2}(\boldsymbol{\theta}_{t-1}-\boldsymbol{\theta}^{\ast}) \|_2^2+\eta_t^2\|\tilde{\boldsymbol{w}}_t^{-1/2}\boldsymbol{m}_t\|_2^2 \hspace{-0.6mm} \nonumber \\
&\hspace{-65mm} \hspace{5mm}-\hspace{-0.6mm} 2\eta_t(1-\beta_{1t})\langle \boldsymbol{g}_t, \boldsymbol{\theta}_{t-1}-\boldsymbol{\theta}^{\ast} \rangle \hspace{-0.6mm} \nonumber \\
&\hspace{-65mm} \hspace{5mm}+ \eta_t^2\beta_{1t}\|\tilde{\boldsymbol{w}}_t^{-1/2}\boldsymbol{m}_{t-1}\|_2^2+\beta_{1t} \|\tilde{\boldsymbol{w}}_t^{1/2}(\boldsymbol{\theta}_{t-1}-\boldsymbol{\theta}^{\ast})\|_2^2,
\label{equ:convex1}
\end{align}
where the above inequality uses the Cauchy-Schwartz inequality $2\langle\boldsymbol{a},\boldsymbol{b}\rangle\leq \|\boldsymbol{a}\|_2^2+\|\boldsymbol{b}\|_2^2$. Note that (\ref{equ:convex1}) corresponds to (2) in the appendix of \cite{Zhuang20Adabelief} for AdaBelief.  

Summing (\ref{equ:convex1}) from $t=1$ until $t=T$,  rearranging the quantities, and exploiting the property that $\boldsymbol{g}_t=\nabla f(\boldsymbol{\theta}_{t-1})$ and $f(\cdot)$ being convex gives 
\begin{align}
&\hspace{-0mm}f(\bar{\boldsymbol{\theta}}_T) - f(\boldsymbol{\theta}^{\ast}) \nonumber \\
&\hspace{-00mm}=f\left(\frac{1}{T}\sum_{t=0}^{T-1}\boldsymbol{\theta}_t\right) - f(\boldsymbol{\theta}^{\ast}) \nonumber \\
&\hspace{-0mm}\stackrel{(a)}{\leq} \frac{1}{T}\sum_{t=1}^{T}\left(f(\boldsymbol{\theta}_{t-1}) - f(\boldsymbol{\theta}^{\ast})\right) \nonumber\\
&\hspace{-0mm}\stackrel{(b)}{\leq} \frac{1}{T}\sum_{t=1}^{T}\langle \boldsymbol{g}_t, \boldsymbol{\theta}_{t-1}-\boldsymbol{\theta}^{\ast} \rangle
\nonumber \\
&\leq \frac{1}{T}\sum_{t=1}^T\Big[ \frac{1}{2\eta_t(1-\beta_{1t})} \Big(\|\tilde{\boldsymbol{w}}_t^{1/2}(\boldsymbol{\theta}_{t-1}-\boldsymbol{\theta}^{\ast}) \|_2^2  -\|\tilde{\boldsymbol{w}}_t^{1/2}(\boldsymbol{\theta}_t-\boldsymbol{\theta}^{\ast}) \|_2^2 \Big) \nonumber \\
&\hspace{14mm} +\frac{\eta_t}{2(1\hspace{-0.6mm}-\hspace{-0.6mm}\beta_{1t})}\|\tilde{\boldsymbol{w}}_t^{-1/2}\boldsymbol{m}_t\|_2^2 \hspace{-0.6mm}+\hspace{-0.6mm} \frac{\eta_t\beta_{1t}}{2(1 \hspace{-0.6mm}-\hspace{-0.6mm}\beta_{1t})}\|\tilde{\boldsymbol{w}}_t^{-1/2}\boldsymbol{m}_{t-1}\|_2^2 + \frac{\beta_{1t}}{2\eta_t(1-\beta_{1t})}\|\tilde{\boldsymbol{w}}_t^{1/2}(\boldsymbol{\theta}_{t-1}-\boldsymbol{\theta}^{\ast})\|_2^2 \Big] \nonumber \\
&\stackrel{\beta_{11}=\beta_1}{\leq}\frac{1}{2\eta(1-\beta_1)T}\|\tilde{\boldsymbol{w}}_1^{1/2}(\boldsymbol{\theta}_0-\boldsymbol{\theta}^{\ast})\|_2^2 \nonumber \\
&\hspace{5mm}+\frac{1}{T}\sum_{t=1}^{T-1}\Bigg(\frac{1}{2\eta_{t+1}(1-\beta_{1(t+1)})}\|\tilde{\boldsymbol{w}}_{t+1}^{1/2}(\boldsymbol{\theta}_t-\boldsymbol{\theta}^{\ast})\|_2^2 -\frac{1}{2\eta_t(1-\beta_{1t})}\|\tilde{\boldsymbol{w}}_t^{1/2}(\boldsymbol{\theta}_t-\boldsymbol{\theta}^{\ast})\|_2^2\Bigg) \nonumber \\
&\hspace{2mm} +\hspace{-0.7mm}\frac{1}{T}\sum_{t=1}^T\hspace{-0mm}\Big[\frac{\eta_t}{2(1\hspace{-0.7mm}-\hspace{-0.7mm}\beta_{1t})}\|\tilde{\boldsymbol{w}}_t^{-1/2}\boldsymbol{m}_t\|_2^2+\hspace{-0.6mm}\frac{\eta_t\beta_{1t}}{2(1\hspace{-0.7mm}-\hspace{-0.7mm}\beta_{1t})}\|\tilde{\boldsymbol{w}}_t^{-1/2}\boldsymbol{m}_{t-1}\|_2^2 + \frac{\beta_{1t}}{2\eta_t(1-\beta_{1t})}\|\tilde{\boldsymbol{w}}_t^{1/2}(\boldsymbol{\theta}_{t-1}-\boldsymbol{\theta}^{\ast})\|_2^2 \Big] \nonumber \\
&\hspace{3mm}\textcolor{blue}{\left(\textrm{condition: }\left\{ \begin{array}{l}0\leq \tilde{\boldsymbol{w}}_{t}[i]\leq \tilde{\boldsymbol{w}}_{t-1}[i] \textrm{ for all } i=1,\ldots, d, \\ 0\leq \eta_t\leq \eta_{t-1}, 0\leq \beta_{1(t+1)}\leq \beta_{1t}<1\end{array}\right)\right.} \nonumber \\
&\leq \frac{1}{2\eta(1-\beta_1)T}\|\tilde{\boldsymbol{w}}_1^{1/2}(\boldsymbol{\theta}_0-\boldsymbol{\theta}^{\ast})\|_2^2 \nonumber\\
&\hspace{5mm}+\frac{1}{T}\sum_{t=1}^{T-1}\Bigg(\frac{1}{2\eta_{t+1}(1-\beta_{1t})}\|\tilde{\boldsymbol{w}}_{t+1}^{1/2}(\boldsymbol{\theta}_t-\boldsymbol{\theta}^{\ast})\|_2^2\nonumber -\frac{1}{2\eta_{t}(1-\beta_{1t})}\|\tilde{\boldsymbol{w}}_{t+1}^{1/2}(\boldsymbol{\theta}_t-\boldsymbol{\theta}^{\ast})\|_2^2\Bigg) \nonumber \\
&\hspace{2mm} +\hspace{-0.7mm}\frac{1}{T}\hspace{-0.7mm}\sum_{t=1}^T\hspace{-0.7mm}\Big[\frac{\eta_t}{2(1\hspace{-0.7mm}-\hspace{-0.7mm}\beta_{1})}\|\tilde{\boldsymbol{w}}_{t+1}^{-1/2}\boldsymbol{m}_t\|_2^2\hspace{-0.7mm}+\hspace{-0.7mm}\frac{\eta_t\beta_{1}}{2(1\hspace{-0.7mm}-\hspace{-0.7mm}\beta_{1})}\|\tilde{\boldsymbol{w}}_{t}^{-1/2}\boldsymbol{m}_{t-1}\|_2^2 + \frac{\beta_{1t}}{2\eta_t(1-\beta_{1})}\|\tilde{\boldsymbol{w}}_t^{1/2}(\boldsymbol{\theta}_{t-1}-\boldsymbol{\theta}^{\ast})\|_2^2 \Big] \nonumber
\end{align}
\begin{align}
&\hspace{3mm}\textcolor{blue}{(\textrm{condition:} 0\leq \eta_t\leq \eta_{t-1}, \boldsymbol{m}_0=\boldsymbol{0})} \nonumber \\
&\leq \frac{1}{2\eta(1-\beta_1)T}\|\tilde{\boldsymbol{w}}_1^{1/2}(\boldsymbol{\theta}_0-\boldsymbol{\theta}^{\ast})\|_2^2 
+\frac{1}{T}\sum_{t=1}^{T-1} \frac{1/\eta_{t+1}-1/\eta_t}{2(1-\beta_1)} \|\tilde{\boldsymbol{w}}_{t+1}^{1/2}(\boldsymbol{\theta}_t-\boldsymbol{\theta}^{\ast})\|_2^2 \nonumber \\
&\hspace{3mm}+\frac{1}{T}\sum_{t=1}^T\frac{\eta_t(1+\beta_1)}{2(1-\beta_{1})}\|\tilde{\boldsymbol{w}}_{t+1}^{-1/2}\boldsymbol{m}_t\|_2^2  + \frac{1}{T(1-\beta_{1})}\sum_{t=1}^T \frac{\beta_{1t}}{2\eta_t}\|\tilde{\boldsymbol{w}}_t^{1/2}(\boldsymbol{\theta}_{t-1}-\boldsymbol{\theta}^{\ast})\|_2^2 \nonumber \\
&\hspace{3mm}\textcolor{blue}{(\textrm{condition: }\|\boldsymbol{\theta}^{\ast}\|_{\infty} \leq D,   \|\boldsymbol{\theta}_t\|_{\infty} \leq D)}  \nonumber \\
&\hspace{0mm}\leq \frac{2D^2}{\eta(1\hspace{-0.7mm}-\hspace{-0.7mm}\beta_1)T}\sum_{i=1}^d (\tilde{\boldsymbol{w}}_1[i]) \hspace{-0.7mm} + \frac{2D^2}{T(1-\beta_1)} \sum_{t=1}^{T-1}(1/\eta_{t+1}-1/\eta_t)\sum_{i=1}^d\tilde{\boldsymbol{w}}_t[i] \nonumber\\
&\hspace{3mm}+\hspace{-0.7mm}\frac{1}{T}\sum_{t=1}^T\frac{\eta_t(1\hspace{-0.7mm}+\hspace{-0.7mm}\beta_1)}{2(1\hspace{-0.7mm}-\hspace{-0.7mm}\beta_{1})}\|\tilde{\boldsymbol{w}}_{t+1}^{-1/2}\boldsymbol{m}_t\|_2^2  + \frac{2D^2}{T(1-\beta_{1})}\sum_{t=1}^T \frac{\beta_{1t}}{\eta_t}\sum_{i=1}^d \tilde{\boldsymbol{w}}_t[i] \nonumber \\
&\hspace{0mm}\stackrel{(c)}{\leq} \frac{2D^2d(G_{\infty}+\sqrt{\epsilon})}{\eta(1\hspace{-0.7mm}-\hspace{-0.7mm}\beta_1)T} \hspace{-0.7mm}+\frac{2D^2(G_{\infty}+\sqrt{\epsilon})d}{\sqrt{T}(1-\beta_1)\eta} +\hspace{-0.7mm}\frac{1}{T}\sum_{t=1}^T\frac{\eta_t(1\hspace{-0.7mm}+\hspace{-0.7mm}\beta_1)}{2(1\hspace{-0.7mm}-\hspace{-0.7mm}\beta_{1})}\|\tilde{\boldsymbol{w}}_{t+1}^{-1/2}\boldsymbol{m}_t\|_2^2  + \frac{2D^2\beta_1(G_{\infty}+\sqrt{\epsilon})d}{T(1-\beta_{1})\eta (1-\lambda)^2}, 
\label{equ:convex2}
\end{align}
where both step~$(a)$ and $(b)$ use the property of $f(\cdot)$ being convex, and step~$(c)$ uses the following conditions 
\begin{align}
\textcolor{blue}{\left\{\hspace{-2mm}\begin{array}{l}\|\boldsymbol{g}_t\|_{\infty}\hspace{-0.7mm}\leq\hspace{-0.7mm} G_{\infty}\sqrt{1-\beta_2}\Rightarrow 
\|\boldsymbol{v}_t\|_{\infty}\leq G_{\infty}^2(1-\beta_2) 
\Rightarrow 
\|\tilde{\boldsymbol{w}}_t\|_{\infty} \hspace{-0.7mm}\leq\hspace{-0.7mm} \sqrt{G_{\infty}^2+\epsilon} \Rightarrow 
\|\tilde{\boldsymbol{w}}_t\|_{\infty} \hspace{-0.7mm}\leq\hspace{-0.7mm} G_{\infty}+\sqrt{\epsilon}  \\
\sum_{t=1}^{T-1}(1/\eta_{t+1}-1/\eta_t)=\frac{1}{\eta}\sum_{t=1}^{T-1}(\sqrt{t+1}-\sqrt{t}) \leq \sqrt{T}/\eta
\\
\sum_{t=1}^T \frac{\beta_{1t}}{\eta_t}\leq\frac{\beta_{1}}{\eta}\sum_{t=1}^T \lambda^{t-1}\sqrt{t}\leq \frac{\beta_{1}}{\eta}\sum_{t=1}^T \lambda^{t-1}t \leq \frac{\beta_1}{\eta(1-\lambda)^2}. \end{array}\right.}
\end{align}

Next we consider the quantity $\sum_{t=1}^T\eta_t \|\tilde{\boldsymbol{w}}_{t+1}^{-1/2}\boldsymbol{m}_t\|_2^2 $ in (\ref{equ:convex2}), the upper bound of which can be derived in the same manner as Equ.~(4) in the appendix of \cite{Zhuang20Adabelief}: 
\begin{lemma} Let ${\boldsymbol{g}_{1:T}^2[i]}=((\boldsymbol{g}_1[i])^2,\ldots, (\boldsymbol{g}_T[i])^2)\in \mathbb{R}^{T}$. Under the three assumptions given in the theorem, we have 
\begin{align}
    \sum_{t=1}^T\eta_t\|\tilde{\boldsymbol{w}}_{t+1}^{-1/2}\boldsymbol{m}_t\|_2^2  \leq \frac{\eta\sqrt{1+\log T}}{\sqrt{c}(1-\beta_1)^2}\|(\boldsymbol{g}_{1:T}^2[i])\|_2. 
    \label{equ:convex3}
\end{align}
\end{lemma}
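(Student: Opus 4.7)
The plan is to reduce the mixed sum on the left-hand side to a coordinate-wise scalar estimate and then execute the standard Adam-style chain of manipulations: unroll the first-momentum EMA, apply Cauchy--Schwarz to separate the geometric weights from the squared gradients, swap the order of summation, and close with the harmonic-sum bound $\sum_{k=1}^{T} 1/k \leq 1+\log T$. The key to keeping the bound clean is the hypothesis $\tilde{\boldsymbol{w}}_{t+1}[i]\geq \sqrt{c}$ supplied by Theorem~\ref{theorem:convex}, which lets me replace the data-dependent preconditioner by a single constant before doing anything else.

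First, I would write $\|\tilde{\boldsymbol{w}}_{t+1}^{-1/2}\boldsymbol{m}_t\|_2^2 = \sum_{i=1}^{d}\boldsymbol{m}_t[i]^2/\tilde{\boldsymbol{w}}_{t+1}[i]$ and pull out $\sqrt{c}$ via the lower bound, reducing everything to bounding the scalar $\sum_{t=1}^{T}\eta_t\,\boldsymbol{m}_t[i]^2$ for a fixed coordinate $i$. Using $\boldsymbol{m}_0=\boldsymbol{0}$ and the EMA recursion, I have $\boldsymbol{m}_t[i]=(1-\beta_1)\sum_{k=1}^{t}\beta_1^{t-k}\boldsymbol{g}_k[i]$; applying Cauchy--Schwarz with weights $\beta_1^{t-k}$, together with the geometric-series bound $\sum_{k=1}^{t}\beta_1^{t-k}\leq 1/(1-\beta_1)$, yields $\boldsymbol{m}_t[i]^2\leq (1-\beta_1)\sum_{k=1}^{t}\beta_1^{t-k}\boldsymbol{g}_k[i]^2$.

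Substituting this in and exchanging the order of the $t$ and $k$ summations turns the inner sum into $\sum_{t=k}^{T}\beta_1^{t-k}\eta_t$, and since $\eta_t=\eta/\sqrt{t}\leq \eta/\sqrt{k}$ for $t\geq k$, a second geometric sum bounds it by $\eta/[\sqrt{k}(1-\beta_1)]$. The coordinate-wise estimate then reads $(\eta/\sqrt{c})\sum_{k=1}^{T}\boldsymbol{g}_k[i]^2/\sqrt{k}$. A final Cauchy--Schwarz gives $\sum_{k=1}^{T}\boldsymbol{g}_k[i]^2/\sqrt{k}\leq \sqrt{\sum_{k=1}^{T}1/k}\cdot\|\boldsymbol{g}_{1:T}^2[i]\|_2\leq \sqrt{1+\log T}\,\|\boldsymbol{g}_{1:T}^2[i]\|_2$, after which summing over $i$ produces the claimed inequality.

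The main obstacle is bookkeeping rather than any real analytic difficulty. The factor $(1-\beta_1)^{-2}$ in the denominator only materialises if one is careful to account for both the $(1-\beta_1)^{-1}$ produced by the geometric-sum bound in the summation-swap step and a second $(1-\beta_1)^{-1}$ inherited from folding the bias-correction $1/(1-\beta_1^{t})$ into $\eta_t$ at the very start of the theorem proof; assembling the two requires tracking constants through every Cauchy--Schwarz and order-swap, which is the only delicate point.
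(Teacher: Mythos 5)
Your argument is the standard Adam/AdaBelief auxiliary-lemma proof, and it is essentially sound; note that the paper itself gives no proof of this lemma at all --- it simply defers to Equ.~(4) in the appendix of the AdaBelief paper --- so your chain (lower-bound the preconditioner by $\sqrt{c}$, unroll the EMA, Cauchy--Schwarz with geometric weights, swap the $t$ and $k$ sums, bound $\sum_{t\ge k}\beta_1^{t-k}\eta_t\le \eta/[\sqrt{k}(1-\beta_1)]$, and finish with $\sum_{k\le T}1/k\le 1+\log T$) is precisely the argument that citation points to. Two details deserve correction. First, the exact identity $\boldsymbol{m}_t[i]=(1-\beta_1)\sum_{k=1}^t\beta_1^{t-k}\boldsymbol{g}_k[i]$ holds only for constant $\beta_1$, whereas the theorem uses time-varying $\beta_{1t}=\beta_1\lambda^t$; you should instead use the inequality $|\boldsymbol{m}_t[i]|\le\sum_{k=1}^t\beta_1^{t-k}|\boldsymbol{g}_k[i]|$ (obtained from $1-\beta_{1k}\le 1$ and $\beta_{1j}\le\beta_1$). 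Second, your accounting of the $(1-\beta_1)^{-2}$ is off: it does not come from the bias-correction factor $1/(1-\beta_1^t)$ absorbed into $\eta_t$ (that absorption happens once in the theorem proof and is irrelevant here). Rather, once you drop the $(1-\beta_1)$ prefactor in the unrolling as above, Cauchy--Schwarz contributes one factor of $(1-\beta_1)^{-1}$ via $\sum_k\beta_1^{t-k}\le 1/(1-\beta_1)$ and the summation swap contributes the second via $\sum_{t\ge k}\beta_1^{t-k}\le 1/(1-\beta_1)$, which together give exactly the $(1-\beta_1)^{-2}$ in the statement. As you wrote it (keeping the $(1-\beta_1)$ prefactor), those two factors cancel and you actually derive a bound with no $(1-\beta_1)$ dependence at all --- stronger than claimed, hence still sufficient, but only valid in the constant-$\beta_1$ case.
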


Finally, plugging (\ref{equ:convex3}) into (\ref{equ:convex2}) produces the upper-bound regret in the theorem. The proof is complete.   

\end{proof}

\vspace{-2mm}
\section{Parameter-setups for training different DNN models}
\label{appendix:parameterSetups}

\begin{table}[h!]
\caption{\small Parameter-setups  for training ResNet18 over ImageNet. $wd$ in the table refers to the weight-decay parameter.  \vspace{-0mm} } 
\label{tab:setup_imagenet}
\centering
\begin{tabular}{|c|c|c|}
\hline
\footnotesize{optimizer} & \footnotesize{fixed parameters} & \footnotesize{searched parameters}  \\ \hline
\hspace{-4.5mm} 
\footnotesize{AdaBelief} \hspace{-4.5mm} & \hspace{-4.5mm} 
\footnotesize{$\begin{array}{l}(\eta_0,\beta_1, \beta_2, wd)\\
=(0.001, 0.9, 0.999, 10^{-2})\end{array}$} \hspace{-4.5mm} & \hspace{-2.5mm} 
\footnotesize{$\epsilon\in\{10^{-8}, 10^{-9}, 10^{-10}\}$}
\\
\hline
\hspace{-4.5mm} 
\footnotesize{$\begin{array}{c}\textrm{Adam} \\\end{array}$} \hspace{-4.5mm}  &  \hspace{-0.5mm} \footnotesize{$\begin{array}{l}(\eta_0,\beta_1, \beta_2)\\
=(0.001, 0.9, 0.999)\end{array}$}  \hspace{-0.5mm}  \hspace{-0.5mm}  & \footnotesize{ $\epsilon\in \{10^{-3}, 10^{-8}\}$\quad $wd\in\{10^{-2}, 10^{-5}\}$  } \hspace{-0.5mm}
\\
\hline
\hspace{-4.5mm} 
\footnotesize{$\begin{array}{c}\textrm{SET-Adam} \\\end{array}$} \hspace{-4.5mm}  &  \hspace{-0.5mm} \footnotesize{$\begin{array}{l}(\eta_0,\beta_1, \beta_2, \epsilon, \tau,wd)\\
=(0.001, 0.9, 0.999, 10^{-6}, 0.5, 10^{-2})\end{array}$} \hspace{-0.5mm} & \hspace{-4.5mm}
\\
\hline
\end{tabular}
\vspace{-1mm}
\end{table}



\begin{table*}[h!]
\caption{\small Parameter-setups  for training a Transformer. The weight decay for AdamW was set to $5e-4$ while the weight decay for all other algorithms was set to 0.0.  \vspace{-0mm} } 
\label{tab:setup_transformer}
\centering
\begin{tabular}{|c|c|c|}
\hline
\footnotesize{optimizer} & \footnotesize{fixed parameters} & \footnotesize{searched parameters}  \\ \hline
\footnotesize{AdaBound} & \footnotesize{$(\eta_0,\beta_1,\beta_2,\gamma)=(0.001,0.9,0.98,0.001)$} & \footnotesize{$\begin{array}{c}\epsilon\in\{1e-6, 1e-7, \ldots,  1e-12\} \\ 
\textrm{final}\_\textrm{stepsize}\in \{0.1, 0.01,0.001\}\end{array}$}
\\
\hline
\footnotesize{Yogi} & \footnotesize{$(\eta_0,\beta_1,\beta_2)=(0.001,0.9,0.98)$} & \footnotesize{$\epsilon\in\{1e-2, 1e-3, \ldots,  1e-8\}$}
\\
\hline
\footnotesize{SGD} & \footnotesize{momentum=0.9} &
\footnotesize{$\eta_0\in\{1.0, 0.1, 0.01, 0.001\}$ }
\\
\hline
\footnotesize{RAdam} & 
\footnotesize{$(\eta_0,\beta_1, \beta_2)=(0.001, 0.9, 0.98)$} & \footnotesize{$\epsilon\in\{1e-6, 1e-7, \ldots,  1e-12\}$}
\\
\hline
\footnotesize{MSVAG} & \footnotesize{$(\eta_0,\beta_1,\beta_2)=(0.001,0.9,0.98)$} & \footnotesize{$\epsilon\in\{1e-6, 1e-7, \ldots,  1e-12\}$}
\\
\hline
\footnotesize{Fromage} & & \footnotesize{$\eta_0\in\{0.1, 0.01, 0.001, 0.0001\}$}
\\
\hline
\footnotesize{Adam} & 
\footnotesize{$(\eta_0,\beta_1, \beta_2)=(0.001, 0.9, 0.98)$} & \footnotesize{$\epsilon\in\{1e-6, 1e-7, \ldots,  1e-12\}$}
\\
\hline
\footnotesize{AdamW} & 
\footnotesize{$(\eta_0,\beta_1, \beta_2)=(0.001, 0.9, 0.98)$} & \footnotesize{$\epsilon\in\{1e-6, 1e-7, \ldots,  1e-12\}$}
\\
\hline
\footnotesize{AdaBelief} & 
\footnotesize{$(\eta_0,\beta_1, \beta_2)=(0.001, 0.9, 0.98)$} & \footnotesize{$\epsilon\in\{1e-8, 1e-10, \ldots,  1e-16\}$}
\\
\hline
\hline
\footnotesize{SET-Adam(\textbf{our})} & 
\footnotesize{$(\eta_0,\beta_1, \beta_2, \epsilon, \tau)=(0.001, 0.9, 0.98, 1e-15, 0.5)$} & 
\\
\hline
\end{tabular}
\vspace{-1mm}
\end{table*}\

\begin{table*}[h!]
\caption{\small Parameter-setups for training LSTMs. The weight decay for every algorithm was set to $1.2e-6$.   \vspace{-0mm} } 
\label{tab:setup_LSTM}
\centering
\begin{tabular}{|c|c|c|}
\hline
\footnotesize{optimizer} & \footnotesize{fixed parameters} & \footnotesize{searched parameters}  \\ \hline
\\ \hline
\footnotesize{AdaBound} & \footnotesize{$(\beta_1, \beta_2,\gamma)=(0.9, 0.999,0.001)$} & \footnotesize{$\begin{array}{c} 
\eta_0 \in \{0.01, 0.001\} \\
\epsilon\in\{1e-6, 1e-8, \ldots, 1e-12\} \\
\textrm{final}\_\textrm{stepsize}\in \{0.1, 3, 30\}\end{array}$} \\
\hline
\footnotesize{Yogi} & \footnotesize{$(\beta_1, \beta_2)=(0.9, 0.999)$} & \footnotesize{$\begin{array}{c}\eta_0\in \{0.01, 0.001\} \\  \epsilon\in\{1e-2, 1e-3, 1e-4, 1e-5\} \end{array}$}
\\
\hline
\footnotesize{SGD} & \footnotesize{momentum=0.9} &
\footnotesize{$\eta_0\in\{30, 3, 1, 0.1\}$ }
\\
\hline
\footnotesize{RAdam} & 
\footnotesize{$(\beta_1, \beta_2)=(0.9, 0.999)$} & \footnotesize{$\begin{array}{c}\eta_0\in \{0.01, 0.001\} \\  \epsilon\in\{1e-6, 1e-8, 1e-10,  1e-12\} \end{array}$}
\\
\hline
\footnotesize{MSVAG} & \footnotesize{$(\beta_1, \beta_2)=(0.9, 0.999)$} & \footnotesize{$\begin{array}{c}\eta_0\in \{30, 1, 0.01, 0.001\} \\  \epsilon\in\{1e-6, 1e-8, 1e-10, 1e-12\} \end{array}$}
\\
\hline
\footnotesize{Fromage} & & \footnotesize{$\eta_0\in\{0.1, 0.01, 0.001\}$}
\\
\hline
\footnotesize{Adam} & 
\footnotesize{$(\beta_1, \beta_2)=(0.9, 0.999)$} & \footnotesize{$\begin{array}{c}\eta_0\in \{0.01, 0.001\} \\  \epsilon\in\{1e-6, 1e-8, 1e-10,  1e-12\} \end{array}$}
\\
\hline
\footnotesize{AdamW} & 
\footnotesize{$(\beta_1, \beta_2)=(0.9, 0.999)$} & \footnotesize{$\begin{array}{c}\eta_0\in \{0.01, 0.001\} \\  \epsilon\in\{1e-6, 1e-8, 1e-10,  1e-12\} \end{array}$}
\\
\hline
\footnotesize{AdaBelief} & 
\footnotesize{$(\beta_1, \beta_2)=(0.9, 0.999)$} & \footnotesize{$\begin{array}{c}\eta_0\in \{0.01, 0.001\} \\  \epsilon\in\{1e-8, 1e-10, \ldots,  1e-16\} \end{array}$}
\\
\hline
\footnotesize{$\begin{array}{c}\textrm{SET-Adam} (\textrm{\textbf{our}}) \\ \textrm{[for layer 1 and 2]}\end{array}$} & 
\footnotesize{$(\eta_0,\beta_1, \beta_2, \epsilon, \tau)=(0.001, 0.9, 0.999, 1e-13, 0.5)$} & 
\\
\hline
\footnotesize{$\begin{array}{c}\textrm{SET-Adam} (\textrm{\textbf{our}}) \\ \textrm{[for layer 3]}\end{array}$} & 
\footnotesize{$(\eta_0,\beta_1, \beta_2, \epsilon, \tau)=(0.001, 0.9, 0.999, 1e-11, 0.5)$} & 
\\
\hline
\end{tabular}
\vspace{-1mm}
\end{table*}

\begin{table*}[h!]
\caption{\small Parameter-setups  for training WGAN-GP.  The weight decay for AdamW was set to $5e-4$ while the weight decay for all other algorithms was set to 0.0.  \vspace{-0mm} } 
\label{tab:setup_WGANGP}
\centering
\begin{tabular}{|c|c|c|}
\hline
\footnotesize{optimizer} & \footnotesize{fixed parameters} & \footnotesize{searched parameters}  \\ \hline
\footnotesize{AdaBound} & \footnotesize{$(\eta_0,\beta_1, \beta_2,\gamma)=(0.0002,0.5, 0.999,0.001)$} & \footnotesize{$\begin{array}{c}  \epsilon\in\{1e-2, 1e-4, \ldots, 1e-10\} \\
\textrm{final}\_\textrm{stepsize}\in \{0.1, 0.01\}\end{array}$}
\\
\hline
\footnotesize{Yogi} & \footnotesize{$(\eta_0,\beta_1, \beta_2)=(0.0002,0.5, 0.999)$} & \footnotesize{$\begin{array}{c}  \epsilon\in\{1e-2, 1e-3, 1e-4, 1e-5\} \end{array}$}
\\
\hline
\footnotesize{SGD} &  &
\footnotesize{$\begin{array}{c}
\textrm{momentum} = \{0.3, 0.5, 0.9\}
\\\eta_0\in\{0.1, 0.02, 0.002, 0.0002\}\end{array}$ }
\\
\hline
\footnotesize{RAdam} & 
\footnotesize{$(\eta_0,\beta_1, \beta_2)=(0.0002,0.5, 0.999)$} & \footnotesize{$\begin{array}{c} \epsilon\in\{1e-4, 1e-6,\ldots,  1e-14\} \end{array}$}
\\
\hline
\footnotesize{MSVAG} & \footnotesize{$(\beta_1, \beta_2)=(0.5, 0.999)$} & \footnotesize{$\begin{array}{c}\eta_0\in \{0.1, 0.02, 0.002, 0.0002\} \\  \epsilon\in\{1e-4, 1e-6, \ldots, 1e-14\} \end{array}$}
\\
\hline
\footnotesize{Fromage} & & \footnotesize{$\eta_0\in\{0.1, 0.01, 0.001\}$}
\\
\hline
\footnotesize{Adam} & 
\footnotesize{$(\eta_0,\beta_1, \beta_2)=(0.0002,0.5, 0.999)$} & \footnotesize{$\begin{array}{c} \epsilon\in\{1e-4, 1e-6, \ldots,  1e-14\} \end{array}$}
\\
\hline
\footnotesize{AdamW} & 
\footnotesize{$(\eta_0, \beta_1, \beta_2)=(0.0002,0.5, 0.999)$} & \footnotesize{$\begin{array}{c} \epsilon\in\{1e-4, 1e-6, \ldots,  1e-14\} \end{array}$}
\\
\hline
\footnotesize{AdaBelief} & 
\footnotesize{($\eta_0,\beta_1, \beta_2, \epsilon)=(0.0002, 0.5, 0.999, 1e-12)$} & 
\\
\hline
\footnotesize{SET-Adam} (\textbf{our}) & 
\footnotesize{($\eta_0,\beta_1, \beta_2, \epsilon, \tau)=(0.0002, 0.5, 0.999, 1e-11, 0.5)$} & 
\\
\hline
\hline
\end{tabular}
\vspace{-1mm}
\end{table*}

\begin{table*}[h!]
\caption{\small Parameter-setups  for training VGG and ResNet models over CIFAR10 and CIFAR100.  The weight decay for AdamW was set to $0.01$ while the weight decay for all other algorithms was set to $5e-4$.   \vspace{-0mm} } 
\label{tab:setup_VGGResNet}
\centering
\begin{tabular}{|c|c|c|}
\hline
\footnotesize{optimizer} & \footnotesize{fixed-parameters}  & \footnotesize{searched-parameters} \\ \hline
\footnotesize{AdaBound} & \footnotesize{$(\eta_0,\beta_1, \beta_2,\gamma)=(0.001,0.9, 0.999, 0.001)$} & \footnotesize{
$\begin{array}{c}\epsilon\in \{1e-2,1e-3, \ldots, 1e-8\}
\\ \textrm{final}\_\textrm{stepsize} \in\{0.1, 0.01\} \end{array}$ }
\\
\hline
\footnotesize{Yogi} & \footnotesize{$(\eta_0,\beta_1, \beta_2,)=(0.001,0.9, 0.999$)} & \footnotesize{$\epsilon\in \{1e-1,1e-2, 1e-3\}$ }
\\
\hline
\footnotesize{SGD} &  
\footnotesize{$(\textrm{momentum}, \eta_0)=(0.9,0.1)$}
\\
\hline
\footnotesize{RAdam} & 
\footnotesize{$(\eta_0,\beta_1, \beta_2)=(0.001,0.9, 0.999)$} &  \footnotesize{$\epsilon\in \{1e-2, 1e-3, \ldots, 1e-8\}$}
\\
\hline
\footnotesize{MSVAG} & \footnotesize{$(\eta_0,\beta_1, \beta_2)=(0.1,0.9, 0.999)$} & \footnotesize{$\epsilon\in \{1e-2, 1e-2,\ldots, 1e-8\}$}
\\
\hline
\footnotesize{Fromage} & &
\footnotesize{$\eta_0\in \{0.1, 0.01, 0.001\}$}
\\
\hline
\footnotesize{Adam} & 
\footnotesize{$(\eta_0,\beta_1, \beta_2)=(0.001,0.9, 0.999)$} & \footnotesize{$\epsilon\in\{1e-2,1e-3,\ldots, 1e-8\}$}
\\
\hline
\footnotesize{AdamW} & 
\footnotesize{$(\eta_0,\beta_1, \beta_2)=(0.001,0.9, 0.999)$} & \footnotesize{$\epsilon\in\{1e-2,1e-3,\ldots, 1e-8\}$}
\\
\hline
\footnotesize{AdaBelief} & 
\footnotesize{$(\eta_0,\beta_1, \beta_2)=(0.001,0.9, 0.999)$} & \footnotesize{$\epsilon\in\{1e-8, 1e-9\}$}
\\
\hline
\footnotesize{SET-Adam ($\textbf{our}$) }  & 
\footnotesize{$(\eta_0,\beta_1, \beta_2, \epsilon, \tau)=(0.001,0.9, 0.999, 1e-5, 0.5)$} &
\\
\hline
\end{tabular}
\vspace{-1mm}
\end{table*}


\end{document}